\crefname{section}{Sec.}{Secs.}
\Crefname{section}{Section}{Sections}
\Crefname{table}{Table}{Tables}
\crefname{table}{Tab.}{Tabs.}
\renewcommand{\vec}[1]{\boldsymbol{#1}}
\newcommand{\todo}[1]{{\color{red} #1}}
\newcommand{\tp}{\intercal}
\newcommand{\RR}{\mathbb R}
\newcommand{\ee}{\boldsymbol{\varepsilon}}
\newcommand{\xx}{\boldsymbol{x}}
\newcommand{\vt}{\boldsymbol{t}}
\newcommand{\yy}{\boldsymbol{y}}
\newcommand{\XX}{\boldsymbol{X}}
\newcommand{\zz}{\vec{z}}
\newcommand{\CC}{\vec{C}}
\newcommand{\vl}{\vec{\lambda}}
\newcommand{\vu}{\vec{\mu}}
\newcommand{\reals}{\mathbb{R}}
\newcommand{\JJ}{\vec{J}}
\newcommand{\Sig}{{\boldsymbol{\Sigma}}}
\newcommand{\norm}[1]{\left\|#1\right\|}
\newcommand{\corr}[1]{{ #1}}
\newtheorem{theorem}{Theorem}
\numberwithin{theorem}{section}
\newtheorem{proposition}[theorem]{Proposition}
\newtheorem{lemma}[theorem]{Lemma}
\theoremstyle{definition}
\newtheorem{remark}[theorem]{Remark}
\newtheorem{example}[theorem]{Example}
\newcommand{\FR}[1]{{\color{orange}{\textbf{FR:} #1 }}}
\newcommand{\VL}[1]{{\color{magenta}\textbf{VL:} #1}}
\newcommand{\AT}[1]{{\color{teal}\textbf{AT:} #1}}
\begin{document}

%%%%%%%%% TITLE - PLEASE UPDATE
\title{Multi-dimensional Sampson Approximations}
\title{Revisiting Sampson Approximations for Geometric Estimation Problems}
\author{Felix Rydell\\
KTH Royal Institute of Technology\\
{\tt\small felixry@kth.se}
% For a paper whose authors are all at the same institution,
% omit the following lines up until the closing ``}''.
% Additional authors and addresses can be added with ``\and'',
% just like the second author.
% To save space, use either the email address or home page, not both
\and
Angélica Torres\\
Max Planck Institute for Mathematics in the Sciences\\
{\tt\small angelica.torres@mis.mpg.de}
\and
Viktor Larsson \\
Lund University \\
{\tt\small viktor.larsson@math.lth.se}
}
\maketitle

%%%%%%%%% ABSTRACT
\begin{abstract}
Many problems in computer vision can be formulated as geometric estimation problems, i.e.~given a collection of measurements (e.g.~point correspondences) we wish to fit a model (e.g.~an essential matrix) that agrees with our observations.
This necessitates some measure of how much an observation ``agrees" with a given model.
A natural choice is to consider the smallest perturbation that makes the observation exactly satisfy the constraints.
However, for many problems, this metric is expensive or otherwise intractable to compute. The so-called Sampson error approximates this geometric error through a linearization scheme. For epipolar geometry, the Sampson error is a popular choice and in practice known to yield very tight approximations of the corresponding geometric residual (the reprojection error).
%Many problems in computer vision can be formulated as geometric estimation problems, i.e.~given a collection of measurements (e.g.~point correspondences) we wish to fit a model (e.g.~an essential matrix) that agrees with our observations.
%This necessitates some measure of how much an observation ``agrees" with a given model.
%A natural choice is to consider what is the smallest perturbation that makes the observation exactly satisfy the model.
%However, for many problems this metric is expensive or otherwise intractable to compute. 
%In this case the so-called Sampson approximation can be used which through a linearization scheme approximates the geometric error.
%For epipolar geometry, the Sampson error is a popular choice and in practice known to yield very tight approximations of the corresponding geometric residual (the reprojection error).

In this paper we revisit the Sampson approximation and provide new theoretical insights as to why and when this approximation works, as well as provide explicit bounds on the tightness under some mild assumptions.
Our theoretical results are validated in several experiments on real data and in the context of different geometric estimation tasks. 

\end{abstract}

\iffalse
\section*{TODO-List}
\todo{
\begin{itemize}
    \item Finish teaser figure
    \item Related work
    \item Make supplementary material references vague
\end{itemize}
}
\fi

%%%%%%%%% BODY TEXT
\section{Introduction}
\label{sec:intro}

Estimating a geometric model from a collection of measurements is a common task in computer vision pipelines.
Prerequisite for any such estimation is the ability to check whether a measurement is consistent with a given model.
This can be used both to filter outlier measurements or as a loss for non-linear model refinement.
 For generative models, i.e.~models that can produce the idealized measurements, it is straight-forward to check this consistency, e.g.~computing the difference between the projection and observed 2D keypoint.
However, in many cases the relation between model and data is implicitly encoded through a set of geometric constraints.
One such example is the epipolar constraint in two-view geometry,
\begin{equation}
    C(\xx_1,\xx_2,\vec{E}) = (\vec{x}_2; 1)^\tp \vec{E} (\vec{x}_1; 1) = 0
\end{equation}
relating the essential (or fundamental) matrix $E$ with the point correspondence ($\vec{x}_1,\vec{x}_2) \in \RR^{2}\times\RR^2$.
For a noisy correspondence the constraint will not be satisfied exactly, so it becomes natural to ask: How close is this match to agreeing with the model?
Mathematically, this can be formulated as
\begin{align} \label{eq:optimal_triangulation}
    \min_{\hat{\xx}_1,~\hat{\xx}_2} &~\|\hat{\xx}_1-\xx_1\|^2 + \|\hat{\xx}_2-\xx_2\|^2 \\
    \text{s.t. } &~ C(\hat{\xx}_1,\hat{\xx}_2,\vec{E}) = 0%\nonumber
\end{align}
i.e., what is the smallest pertubation to the points such that they exactly satisfy the constraint.
This minimum distance gives a measure of the consistency between the model (the essential matrix $E$) and the measurements ($\xx_1$ and $\xx_2$.)

\begin{figure}[t]
\centering
\includegraphics[width=0.46\textwidth]{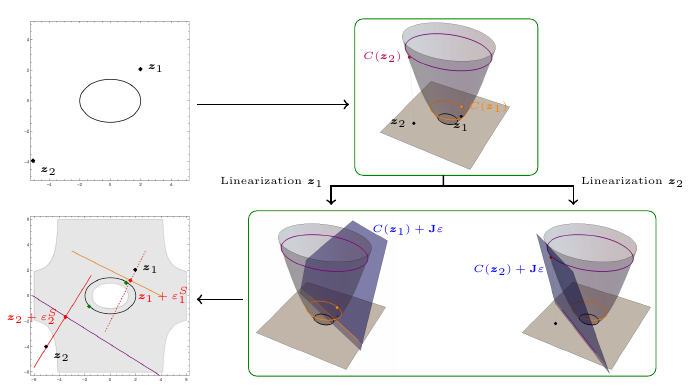}
\caption{\scriptsize The model $C(x,y)= x^2+2y^2-4 =0$ is the ellipse on the top left, and the data points are $\zz_1$ and $\zz_2$. On the top right, the gray surface is the graph $(x,y,C(x,y))$, the orange and purple curves are the level sets $C(x,y)=C(\zz_1)$ and $C(x,y)=C(\zz_2)$ respectively. In the bottom right the blue planes are tangent to \corr{the surface at $(\zz_1,C(\zz_1))$ and $(\zz_2,C(\zz_2))$} respectively. %$C(\zz_1)$ and $C(\zz_2)$ respectively.
The orange and purple lines are the linearized constraints for $\zz_1$ and $\zz_2$ respectively. We keep this color convention for the linearized constraints on the bottom left, and represent their normals in red. The Sampson approximations for $\zz_1$ and $\zz_2$ are the red points \corr{$\zz_1+\ee_1^S$ and $\zz_2+\ee_2^S$}, and %$\ee_1^S$ and $\ee_2^S$ respectively.
the minimizers of the geometric error are depicted in green. %The geometric errors are depicted in green. 
Since $\zz_1$ is in the gray region (obtained from Proposition \ref{prop: eG bound}), its Sampson approximation is better than the approximation for $\zz_2$.} % \FR{Explain what $T_C(z_i)$ is.} \FR{In the top linearization, could we have the same surface patch as the two other ones? Like including the purple circle.}}
\label{fig:teaser}
\end{figure}

For the specific case of the epipolar constraint, the optimal solution to \eqref{eq:optimal_triangulation} can be computed~\cite{hartley1997triangulation,lindstrom2010triangulation} by finding the roots to degree 6 univariate polynomial.  As this procedure is relatively expensive, and difficult to integrate into non-linear refinement methods, it is common in practice to instead use some approximation of the error in \eqref{eq:optimal_triangulation}.
One example of this is the Sampson error~\cite{sampson1982fitting,luong1996fundamental}, defined as
\begin{equation}
    \label{eq:classic_sampson}
     \mathcal{E}^2_{S}= \frac{(({\vec{x}}_2; 1)^\tp \vec{E} ({\vec{x}}_1; 1))^2}{\|\vec{E}_{12}({\vec{x}}_1; 1)\|^2 + \|(\vec{E}^\tp)_{12}({\vec{x}}_2; 1)\|^2}.
\end{equation}
This expression is derived by linearizing the constraint in \eqref{eq:optimal_triangulation}, i.e.~by replacing $C(\hat{\xx}_1,\hat{\xx}_2,E) = 0$ by the constant and linear terms of its Taylor expansion around the data point $(\xx_1,\xx_2)$. It provides a good approximation of \eqref{eq:optimal_triangulation} if the curvature of the constraint at the data point is small enough in relation to the size of $C(\xx_1,\xx_2,E)$, while being cheap to compute and easy to optimize in non-linear refinement.
%which provides a good approximation of \eqref{eq:optimal_triangulation} while being cheap to compute and easy to optimize in non-linear refinement.
%The expression in \eqref{eq:classic_sampson} is derived by applying the Sampson approximation (linearizing the constraint) to \eqref{eq:optimal_triangulation}. 
This approach was originally used by Sampson~\cite{sampson1982fitting} to approximate distances between conics and points, but has since been applied to many other geometric models.

In this paper we revisit the Sampson approximation and provide new theoretical insights into why (and when) the approximation works well.
Under relatively mild assumptions we derive explicit bounds on the tightness of the approximation.
These bounds are experimentally validated on real data and showcased in multiple applications from geometric computer vision (two- and three-view estimation, vanishing point estimation and resectioning).

The paper is organized as follows:  Section~\ref{sec:related_work} discusses the related work. In Section~\ref{sec:classic_sampson} we first present the classical derivation of the Sampson error, along with some geometrical interpretations of the approximation. 
In Section~\ref{sec:bounds} we present our bounds on the approximation. 
Finally, in Section~\ref{sec:experiments} we provide some experimental evaluation. %\FR{We should present the sections in order.} \AT{I just changed it :)}

% Polynomial interpolation: https://www.geogebra.org/m/PkBMaCvy

%\begin{tikzpicture}
%  \draw[->] (-3, 0) -- (4.2, 0); %node[right] {$x$};
 % \draw[->] (0, -3) -- (0, 4.2) node[above] {$y$};
 % \draw[scale=0.5, domain=-1.5:15, smooth, variable=\x, blue] plot ({\x-5}, {5*((0.001451)*\x*\x*\x*\x-(0.039009)*\x*\x*\x+(0.352467)*\x*\x-(1.193414)*\x+(0.878505))});
 % \node at (0,0) [circle,fill,inner sep=1.5pt]{};
  %\draw[scale=0.5, domain=-3:3, smooth, variable=\y, red]  plot ({\y*\y}, {\y});
%\end{tikzpicture}

\subsection{Related Work} \label{sec:related_work}
%\AT{Revisit the idea of moving the related work to the introduction?}
\paragraph{The Geometric Error.} In applied algebraic geometry, fitting noisy data points to a mathematical model defined by polynomials has recently seen a lot of interest \cite{draisma2016euclidean}, and more specifically for 3D reconstruction \cite{maxim2020euclidean,rydell2023theoretical}. One of the main contributions is the development and computation of the so called Euclidean distance degree, which is the number of critical points to the closest point optimization problem given generic data. It expresses the algebraic complexity of fitting data to a model; the higher the Euclidean distance degree is, the more computationally expensive this optimization is. It is used to implement efficient solvers in Homotopy Continuation \cite{HomotopyContinuation} or for solving the associated polynomials systems via specialized symbolic solvers \cite{larsson2017efficient}.

%gives an upper bound on the number of closest point in a model, and it is used to implement efficient solvers in Homotopy Continuation \cite{HomotopyContinuation}. \FR{Solving the associated polynomials systems is typically done either via specialized solvers or homotopy continuation methods.}\FR{ References?}   

\paragraph{The Sampson Error.}
The Sampson approximation was first proposed in~\cite{sampson1982fitting} to approximate the point-conic distance. 
It was also later derived independently by Taubin~\cite{taubin1991estimation}.
Since then it has appeared in numerous papers for different problems.
Luong and Faugeras~\cite{luong1996fundamental} introduced it for approximating the reprojection error in epipolar geometry.
The corresponding geometric error for homographies was first introduced by Sturm~\cite{sturm1997vision} and later revisited by Chum et al.~\cite{chum2005geometric} who also dervied the Sampson approximation in this setting.
Leonardos et al.~\cite{leonardos2015metric} used the Sampson error for point-line-line constraint from the trifocal tensor.
Chojnacki et al.~\cite{chojnacki2000fitting} considered how to integrate known measurement covariances, and recently Terekhov et al.~\cite{terekhov2023tangent} generalized the Sampson error in epipolar geometry to handle arbitrary central camera model.
In~\cite{szpak2012guaranteed} it was used in an optimization method for conic fitting that guarantees convergence to an ellipse. 
This extensive use of the Sampson approximation for geometric problems shows its versatility, which motivates a deeper theoretical study in a more general setting. This is precisely our goal in this work.  

%\AT{Highlight that we give theoretical bounds.} \todo{Add: Why are we listing these references - how do they relate to the contributions in this paper.}

\section{The Sampson Approximation} \label{sec:classic_sampson}
%Traditional Sampson error~\cite{sampson1982fitting,luong1996fundamental} approximates the geometric error, i.e.~the distance to the closest pair of points which satisfy the epipolar constraint. 
%The type of problems we consider in this paper can be formulated as follows
%In general this can be formulated as follows
We consider geometric residuals analogous to \eqref{eq:optimal_triangulation} for general models, i.e.~problems of the form
\begin{align} \label{eq:geometric_err}
  \mathcal{E}_{G}^2(\zz,\vec{\theta}) =\min_{\ee}\quad & \norm{\ee}^2 \\
  \text{s.t. }\quad & C(\zz + \ee, \vec{\theta}) = 0 \label{eq:geometric_err_C}
\end{align}
where \corr{$\zz \in \reals^n$} %$\zz \in \reals^N$
is our measurement, $\vec{\theta}$ our model parameters, and $C(\zz,\vec{\theta})$ our geometric constraint.
In the special case of epipolar geometry, we have that $\zz = [\xx_1,\xx_2]$ consists of the two matching image points, \corr{the model parameters are the entries of the essential matrix $\vec{E}$,} %our model is the essential matrix $\vec{E}$ 
and the constraint is,
\begin{equation}
   C(\zz, \vec{E}) = (\vec{{x}}_2; 1)^\tp \vec{E} (\vec{{x}}_1; 1)=0.
\end{equation}
In the rest of the paper we consider a general polynomial constraint $C(\zz,\vec{\theta})$ and will for notational convenience drop the dependence on the model parameters $\vec{\theta}$ in most places.

Since~\eqref{eq:geometric_err} often does not admit a simple closed form solution, the idea in \cite{sampson1982fitting} is to linearize the constraint, $C(\zz + \ee) = 0$, at the original measurement $\zz$, i.e.
\begin{align} \label{eq:lin_geometric_err}
  %\mathcal{E}_{S}^2(\zz,\vec{E})=\min_{\ee} \quad & \norm{\ee}^2 \\
  \corr{\mathcal{E}_{S}^2(\zz,\vec{\theta})}=\min_{\ee} \quad & \norm{\ee}^2 \\
  \text{s.t. }\quad &  C(\zz) + \vec{J} \ee = 0 \label{eq:lin_geometric_err_C}
\end{align}
where $\vec{J}=\partial C(\vec{z})/\partial \vec{z}$ is the Jacobian of the constraint, evaluated at $\zz$. Introducing a Lagrangian for \eqref{eq:lin_geometric_err},\footnote{For convenience we introduce 1/2 here; it does not affect the optimum.}
\begin{equation}
    %\mathcal{L}(\ee,\lambda) = \frac1{2}\norm{\ee}^2 + \lambda \left( C(\zz) + \vec{J} \ee \right)
    \corr{\mathcal{L}(\ee,\alpha) = \frac1{2}\norm{\ee}^2 + \alpha \left( C(\zz) + \vec{J} \ee \right)}
\end{equation}
we get the first-order constraints as
\begin{align}
     %\ee + \lambda \vec{J}^\tp = 0, \quad 
    \ee + \corr{\alpha \vec{J}^\tp = 0,} \quad 
    C(\zz) + \vec{J}\ee = 0.
\end{align}
Inserting the first equation into the second yields
\begin{equation}
    %C(\zz) - \lambda \|\vec{J}\|^2 = 0 \implies \ee = - \frac{C(\zz)}{\|\vec{J} \|^2}  \vec{J}^\tp.
    \corr{C(\zz) - \alpha \|\vec{J}\|^2 = 0} \implies \ee = - \frac{C(\zz)}{\|\vec{J} \|^2}  \vec{J}^\tp.
\end{equation}
Thus the minimum in \eqref{eq:lin_geometric_err} is given by
\begin{equation}
    %\mathcal{E}_{S}^2 = \left\|\frac{C(\zz)}{\|\vec{J} \|^2}  \vec{J}^\tp \right\|^2 = \frac{C(\zz)^2}{\|\vec{J}\|^2}.
    \corr{\mathcal{E}_{S}^2(\zz,\vec{\theta})} = \left\|\frac{C(\zz)}{\|\vec{J} \|^2}  \vec{J}^\tp \right\|^2 = \frac{C(\zz)^2}{\|\vec{J}\|^2}.
\end{equation}
In the special case of epipolar geometry, replacing $C(\zz)$ with the epipolar constraint, we arrive at the classical formula for the Sampson error \eqref{eq:classic_sampson}.

\subsection{Multiple Constraints and Covariances}\label{ss: mult const}
%In the previous section we saw the Sampson approximation for the case where we have single constraint. 
The Sampson error for a single constraint, discussed previously,  can easily be generalized to the case where we measure the deviation in the Mahalanobis distance from a model defined by multiple constraints. See \cite[p128]{hartley2003multiple} for more details.
Assume that the $N$ constraints are given by
\begin{equation}
    \CC(\zz) = \left(C_1(\zz), C_2(\zz),~\dots,C_N(\zz)\right)^\tp = \vec{0},
\end{equation}
and we want to measure the deviation $\ee$ in Mahalanobis distance for a given \corr{positive definite} covariance $\Sig$. This can be formulated as
\begin{align}%\label{eq:geometric_err}
  %\mathcal{E}_{G}^2(\zz,\vec{E}) =\min_{\ee}\quad & \norm{\ee}^2_\Sig \\
  \corr{\mathcal{E}_{G}^2(\zz,\vec{\theta})} =\min_{\ee}\quad & \norm{\ee}^2_\Sig \\
  \text{s.t. }\quad & \vec{C}(\zz + \ee) = \vec{0}, %\nonumber
\end{align}
where $\norm{\ee}^2_\Sig = \ee^\tp \Sig^{-1} \ee$. The Lagrangian of the linearized constraint then becomes
\begin{equation}
    %\mathcal{L}(\ee,\vl) = \frac1{2} \norm{\ee}^2_\Sig + \vl^\tp \left( \CC(\zz) + \vec{J} \ee  \right)
    \mathcal{L}(\ee,\vec{\alpha}) = \frac1{2} \norm{\ee}^2_\Sig + \vec{\alpha}^\tp \left( \CC(\zz) + \vec{J} \ee  \right),
\end{equation}
where $\vec{J} \in \reals^{N\times n}$ is the Jacobian of $\vec{C}$ evaluated at $\zz \in \reals^n$.
First order conditions again yield
\begin{align}
%     \dfrac{\partial \mathcal{L}}{\partial \ee} &= \Sig^{-1}\ee + \JJ^\tp \vl = \vec{0} \\
%     \dfrac{\partial \mathcal{L}}{\partial \vl} &= \CC(\zz) + \JJ \ee = \vec{0}
     %\Sig^{-1}\ee + \JJ^\tp \vl = \vec{0}, \quad
     \Sig^{-1}\ee + \corr{\JJ^\tp \vec{\alpha}} = \vec{0}, \quad
    \CC(\zz) + \JJ \ee = \vec{0}.
\end{align}
We get $\corr{\ee = -\Sig \JJ^\tp \vec{\alpha},}$ %$\ee = -\Sig \JJ^\tp \vl$,
allowing us to solve for $\corr{\vec{\alpha}}$ as
\begin{equation}
    %\CC(\zz) - \JJ \Sig \JJ^\tp \vl = \vec{0} \implies \vl = (\JJ \Sig \JJ^\tp)^{-1} \CC(\zz).
    \corr{\CC(\zz) - \JJ \Sig \JJ^\tp \vec{\alpha} = \vec{0} \implies \vec{\alpha} = (\JJ \Sig \JJ^\tp)^{-1} \CC(\zz)},
\end{equation}
%This step depends on 
when $\JJ\Sig\JJ^\tp$ is invertible (in \Cref{ss: rank-def} we consider the general case). Let $\ee^S$ denote the argmin of the linearized optimization problem. From this we get
%This step only works if the rows of $\JJ$ are linearly independent, meaning when $\JJ\Sig\JJ^\tp$ is invertible (see \Cref{ss: rank-def} for rank-deficient $\JJ$).
%\AT{The previous matrix is not invertible if there are more constraints than the dimension of the space.} \FR{We assume $\JJ$ is full-rank and then pick out constraints corresponding to linearly independent vectors. This makes the Sampson error not unique and it matters how the choice of generators are made.}
\begin{equation}
    \ee^S = -\Sig \JJ^T (\JJ \Sig \JJ^\tp)^{-1} \CC(\zz),
\end{equation}
%Thus
%\begin{align}
%    \norm{\ee}^2_\Sig &= \| \Sig^{-1/2} \ee \|^2  \\ 
%    &= \| \Sig^{1/2} \JJ^\tp (\JJ\Sig \JJ^\tp)^{-1} \CC(\zz) \|^2 \\
%    &= \| (\JJ\Sig^{1/2})^\dagger \CC(\zz) \|^2.  \label{eq:multi-dim-sampson}
%\end{align}
%where $(\cdot)^\dagger$ denotes the Moore-Penrose psuedo-inverse.
%\VL{Maybe we should only report \eqref{eq:cjjc} here first, and then in the general case arrive at the psuedo-inverse formula?} 
and finally, $\|\ee^S\|_\Sig^2 = \ee^S\Sig^{-1}\ee^S$, which simplifies to
\begin{align} \label{eq:cjjc}
    \CC(\zz)^\tp (\JJ\Sig\JJ^\tp)^{-1}\CC(\zz) = \| \CC(\zz) \|^2_{\JJ\Sigma\JJ^\tp}.
\end{align}

%%%%%%%%%%%%%%%%%%%%%%%%%%%%%%%%%%%%%%%%%%%%%%%%%%%%

\subsection{General Case}\label{ss: rank-def}
Let $(\cdot)^\dagger$ denote the Moore-Penrose psuedo-inverse of a matrix. Assuming that the linearized equation $\CC(\zz)+\JJ\ee=\vec{0}$ has at least one solution, meaning that $\CC(\zz)\in \mathrm{Im}\;\JJ=\mathrm{Im}\;\JJ\Sig^{1/2}$, then $(\JJ\Sig^{1/2})(\JJ\Sig^{1/2})^\dagger\CC(\zz)=\CC(\zz)$.
In this case, all solutions to the linearized equation can be written
\begin{equation}
    \ee(\vu) = - \Sig^{1/2}(\JJ\Sig^{1/2})^\dagger\CC(\zz)+M\vu, \quad \vu \in \reals^N %N\vec{\lambda}, \quad \vl \in \reals^m
\end{equation}
where the columns of $M$ %$N$ 
is a basis for the nullspace of $\JJ$. In order to find $\ee^S$ we consider
\begin{align}
   \|\ee^S\|_\Sig &= \min_{\vu} \|\Sig^{-1/2}\left( - \Sig^{1/2}(\JJ\Sig^{1/2})^\dagger\CC(\zz)+M\vu\right)\|\\
  & = \min_{\vu} \|- (\JJ\Sig^{1/2})^\dagger\CC(\zz)+\Sig^{-1/2}M\vu\|.
\end{align}
Here we note that $M^\tp\Sig^{-1/2} (\JJ\Sig^{1/2})^\dagger=\vec{0}$, which implies that the optimal choice is $\vu=\Vec{0}$. To be precise, we get
\begin{align}
    \ee^S=- \Sig^{1/2}(\JJ\Sig^{1/2})^\dagger\CC(\zz).
\end{align}
and thus
\begin{equation}
    \|\ee^S\|^2_\Sig = \|(\JJ\Sig^{1/2})^\dagger\CC(\zz)\|^2.
\end{equation}

\begin{remark}
For the case of $\Sigma=I$, we note that $\|\ee^S\|=\|\JJ^\dagger\CC(\zz_0)\|$ is the length of the Gauss-Newton step,
\begin{equation}
    \zz_1 = \zz_0 - (\JJ^\tp\JJ)^{-1}\JJ^\tp \CC(\zz_0)
\end{equation}
applied to solving $\CC(\xx) = 0$ starting from the point $\zz_0$. 
Figure~\ref{fig:teaser} visualizes this for one constraint in two variables. 
\end{remark}

\section{Bounding the Approximation Error} \label{sec:bounds}

%In this section we investigate the Sampson approximation~\eqref{eq:lin_geometric_err} in terms of how well it approximates the geometric error \eqref{eq:geometric_err}. We do this by constructing explicit bounds. We start in \Cref{ss: quad - 1 const} by studying one quadratic constraint. In \Cref{ss: high deg - 1 const}, we generalize our methods by considering constraints of higher degrees. Finally, we discuss the case of multiple constraints in \Cref{ss: many const}, but leave the details for the Supplementary Material. 
In this section we investigate the Sampson approximation~\eqref{eq:lin_geometric_err} in terms of how well it approximates the geometric error \eqref{eq:geometric_err}. We do this by constructing explicit bounds. We start by studying one quadratic constraint and then we generalize our methods by considering constraints of higher degrees. Finally, we discuss the case of multiple constraints, but leave the details in the Supplementary Material. 

%\todo{Can we say something about $|\mathcal{E}_{ML} - \mathcal{E}_S |$, or something similar ? Probably in terms of how good the initial point is, i.e.~how close is $\zz$ to satisfy the constraint $C(\zz) = 0$. I guess by making some assumptions on the jacobian, or the hessian we can get something here.}
% \textcolor{blue}{E: From the Taylor series expansions, the error scales proportionally with the step size $h$ raised to the power of $(n + 1)$, where in our specific scenario, with the first-order approximation, $h^2$ is given by $ (\frac{f(x)}{\|\nabla(f)\|})^2.$ Thus, the error cannot exceed this limit.}

%%%%%%%%%%%%%%%%%%%%%%%%%%%%%%%%%%%%%%%%%%%%%%%%%%%%%%%%%%%%%%%%%%%%%%%%%%%%%%%%%%%%%%%%%%%%%%%%%%%%%%%%%%%%%%%%%%%%%%%%%%%%%%%%%%%%%%%%%%%%%%%%%%%%%%%%%%%%%%%%%%%%%%%%%%%%%%%%

\subsection{One Quadratic Constraint}\label{ss: quad - 1 const}
Consider the case where we have one quadratic constraint $C(\zz)$.
The classical Sampson error \eqref{eq:classic_sampson} falls into this category as the epipolar constraint is a quadratic polynomial in terms of the image points. 

We write $\ee^G$ for the argmin of \eqref{eq:geometric_err} and $\ee^S$ for the argmin of \eqref{eq:lin_geometric_err}. %We denote our one quadratic constraint by $C:\RR^n\to \RR$. 
Given a fixed data point $\zz\in \RR^n$, we write $\Vec{H}$ for the Hessian of $C$ at $\zz$. Since $C$ is a quadratic polynomial, 
\begin{align} \label{eq:C_quadratic}
    C(\zz+\ee)=C(\zz)+\JJ\ee +\frac{1}{2}\ee^\tp \Vec{H}\ee.
\end{align}
We use that any vector $\ee$ satisfies the inequality
\begin{align}
    |\ee^\tp \Vec{H} \ee|\le \rho \|\ee\|^2
\end{align}
where $\rho$ is the spectral radius of the Hessian of $C(\zz)$, that is, the maximum of the absolute values of its eigenvalues.

First we present an upper bound on the Sampson approximation $\|\ee^S\|$ in terms of the geometric residual $\|\ee^G\|$. %, and the basic idea is to take $\ee^G$ and orthogonally project it onto the affine linear space defined by the linearized constraint $C(\zz)+\JJ\ee=0$.

\begin{proposition}\label{prop: eS lower} When the optimization problem \eqref{eq:geometric_err} only has one quadratic constraint and $\JJ\neq 0$, then
    \begin{align}
        \|\ee^S\|\le \|\ee^G\|+ \frac{\rho}{2\|\JJ\|}\|\ee^G\|^2.
    \end{align}
\end{proposition}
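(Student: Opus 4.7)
The plan is to use the closed form $\|\ee^S\| = |C(\zz)|/\|\JJ\|$ derived earlier, express $C(\zz)$ in terms of $\ee^G$ using the quadratic Taylor expansion, and then bound the result by the triangle inequality plus the spectral-radius bound on the Hessian quadratic form.

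First, I would note that $\ee^G$ exactly satisfies the original constraint, so by the quadratic expansion \eqref{eq:C_quadratic},
\begin{equation}
    0 = C(\zz + \ee^G) = C(\zz) + \JJ \ee^G + \tfrac{1}{2}(\ee^G)^\tp \Vec{H}\, \ee^G,
\end{equation}
which rearranges to $C(\zz) = -\JJ \ee^G - \tfrac{1}{2}(\ee^G)^\tp \Vec{H}\, \ee^G$. Applying the triangle inequality and Cauchy--Schwarz to the first term, and the spectral-radius bound $|(\ee^G)^\tp \Vec{H}\, \ee^G| \le \rho \|\ee^G\|^2$ to the second, I obtain
\begin{equation}
    |C(\zz)| \le \|\JJ\|\, \|\ee^G\| + \tfrac{\rho}{2}\|\ee^G\|^2.
\end{equation}

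Finally, since Section~\ref{sec:classic_sampson} showed $\|\ee^S\| = |C(\zz)|/\|\JJ\|$ (valid because $\JJ \neq 0$), dividing the previous display by $\|\JJ\|$ yields the claimed bound. There is no real obstacle here: the argument is essentially a one-line consequence of the fact that the quadratic Taylor remainder is controlled by the spectral radius of $\Vec{H}$, together with the explicit Sampson formula. The only point worth being careful about is that the inequality $|\ee^\tp \Vec{H}\, \ee| \le \rho \|\ee\|^2$ is applied to $\ee = \ee^G$, which is legitimate since this inequality holds for every vector. No assumption about the sign of $C(\zz)$ or about $\Vec{H}$ being positive semidefinite is needed, because $\rho$ is defined as the maximum of absolute values of eigenvalues.
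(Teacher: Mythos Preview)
Your proof is correct and essentially equivalent to the paper's argument; both start from the exact Taylor identity $0=C(\zz)+\JJ\ee^G+\tfrac12(\ee^G)^\tp\Vec{H}\ee^G$ and finish with the spectral-radius bound. The only cosmetic difference is that you invoke the closed form $\|\ee^S\|=|C(\zz)|/\|\JJ\|$ and bound $|C(\zz)|$ directly via Cauchy--Schwarz, whereas the paper rewrites the identity as $C(\zz)+\JJ\big(\ee^G+\tfrac{\JJ^\tp}{2\|\JJ\|^2}(\ee^G)^\tp\Vec{H}\ee^G\big)=0$, observes that the vector in parentheses is feasible for the linearized constraint, and then uses minimality of $\ee^S$ plus the triangle inequality. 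Your route is marginally shorter here; the paper's feasibility-based phrasing has the advantage that it generalizes verbatim to the multi-constraint setting (replace $\JJ^\tp/\|\JJ\|^2$ by $\JJ^\dagger$), which is exploited in the supplementary material.
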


Under the assumption that $\rho/2\|\JJ\|$ is reasonably small (i.e.~function is approximately linear), we can interpret this proposition as saying that if $\ee^S$ is big, then $\ee^G$ is also big.

The condition $\JJ\neq 0$ is reasonable, as the approximation is undefined if the linearized constraint set is empty. %This relation implies that if $\|\ee^S\|$ is big, then $\|\ee^G\|$ is also big, so having a high error in the linearization problem guarantees that the geometric error is also high. \VL{Well I guess it depends on $\rho/2\|J\|$ ? Maybe this is very big but $\|\ee^G\|$ is small?} \FR{Yes, I meant in relative terms.}

\begin{proof} Since $\ee^G$ satisfies the constraint \eqref{eq:C_quadratic}, we have that
\begin{align}%\begin{aligned}
    0=&~C(\zz)+\JJ\ee^G+\frac{1}{2}(\ee^G)^\tp \Vec{H}\ee^G\\
    =&~  C(\zz)+\JJ\Big(\ee^G+\frac{\JJ^\tp}{\|\JJ\|^2}\frac{1}{2}(\ee^G)^\tp \Vec{H}\ee^G\Big).
%\end{aligned}
\end{align}
In other words, $ \ee^G+\frac{\JJ^\tp}{\|\JJ\|^2}\frac{1}{2}(\ee^G)^\tp \Vec{H}\ee^G$ satisfies the linearized constraint. 
Since $\ee^S$ is by definition the smallest vector that satisfies the linearized constraint, 
we get that
\begin{align}
    \|\ee^S\|\le \|\ee^G\|+\frac{1}{2\|\JJ\|}|(\ee^G)^\tp \Vec{H}\ee^G|.
\end{align}
Finally, using the inequality of the spectral radius $\rho$ mentioned above, we get the upper bound.
\end{proof}

Next we give an upper bound of $\|\ee^G\|$ in terms of $\|\ee^S\|$.
\begin{proposition}\label{prop: eG bound}
    If the minimization problem \eqref{eq:geometric_err} only has one quadratic constraint, 
    \begin{align}\label{eq: C(z) above}
       \JJ\neq 0 \textnormal{ and } \|\JJ\|^4\ge 2|C(\zz)||\JJ \Vec{H}\JJ^\tp|,
    \end{align}
    then 
    \begin{align}\label{eq:eG_bound}
        \|\ee^G\|\le 2\|\ee^S\|.
    \end{align}
\end{proposition}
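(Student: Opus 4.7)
The key observation is that for a single quadratic constraint, the Sampson approximation has the closed form $\|\ee^S\|=|C(\zz)|/\|\JJ\|$ derived earlier, so the target inequality \eqref{eq:eG_bound} is equivalent to exhibiting a feasible perturbation $\ee$ (i.e.\ satisfying $C(\zz+\ee)=0$) whose norm is at most $2|C(\zz)|/\|\JJ\|$. The natural candidate is a perturbation along the gradient direction, i.e.\ parallel to $\JJ^\tp$, since this is the direction already chosen by the Sampson step.

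\textbf{Construction.} I would parametrize the line through $\zz$ in the $-\JJ^\tp$ direction by
\begin{equation}
    \ee(t) = -t\,\frac{\JJ^\tp}{\|\JJ\|}, \qquad t\in\RR,
\end{equation}
so that $\|\ee(t)\|=|t|$, $\JJ\ee(t)=-t\|\JJ\|$, and $\ee(t)^\tp\Vec{H}\ee(t)=\frac{t^2}{\|\JJ\|^2}\JJ\Vec{H}\JJ^\tp$. Plugging into the quadratic expansion \eqref{eq:C_quadratic} gives the scalar function
\begin{equation}
    f(t) := C(\zz+\ee(t)) = C(\zz) - t\|\JJ\| + \frac{t^2}{2\|\JJ\|^2}\,\JJ\Vec{H}\JJ^\tp.
\end{equation}
The problem reduces to showing that $f$ has a root in the interval $[0,\,2|C(\zz)|/\|\JJ\|]$ (after handling a sign convention).

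\textbf{Intermediate value argument.} Assume without loss of generality that $C(\zz)>0$ (the case $C(\zz)<0$ follows by flipping the direction of $\ee(t)$, and $C(\zz)=0$ is trivial since then $\ee^S=\ee^G=0$). Then $f(0)=C(\zz)>0$. Evaluating at the endpoint $t_\star := 2C(\zz)/\|\JJ\|$ yields
\begin{equation}
    f(t_\star) = -C(\zz) + \frac{2C(\zz)^2}{\|\JJ\|^4}\,\JJ\Vec{H}\JJ^\tp
               \le -C(\zz) + \frac{2C(\zz)^2}{\|\JJ\|^4}\,|\JJ\Vec{H}\JJ^\tp|.
\end{equation}
The hypothesis \eqref{eq: C(z) above} is exactly what is needed to force the second term to be at most $C(\zz)$, so that $f(t_\star)\le 0$. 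Continuity of $f$ then yields some $t^\star\in[0,t_\star]$ with $f(t^\star)=0$, and the vector $\ee(t^\star)$ is feasible for \eqref{eq:geometric_err}. Hence
\begin{equation}
    \|\ee^G\| \le \|\ee(t^\star)\| = t^\star \le \frac{2C(\zz)}{\|\JJ\|} = 2\|\ee^S\|,
\end{equation}
as required.

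\textbf{Anticipated obstacle.} The computation itself is elementary; the only subtle point is checking that the hypothesis \eqref{eq: C(z) above} is tight enough to drive $f(t_\star)$ below zero. The inequality $\JJ\Vec{H}\JJ^\tp\le|\JJ\Vec{H}\JJ^\tp|$ is what allows us to use the absolute value in the hypothesis, and it is cleaner to state the bound in terms of the directional quantity $|\JJ\Vec{H}\JJ^\tp|$ than the spectral radius $\rho$ used in Proposition~\ref{prop: eS lower}, because the perturbation we construct lives in a one-dimensional subspace. The symmetric treatment of $C(\zz)<0$ could be unified from the start by replacing $\ee(t)$ by $-\mathrm{sign}(C(\zz))\,t\,\JJ^\tp/\|\JJ\|$, which I would do in a clean writeup to avoid repeating the IVT argument.
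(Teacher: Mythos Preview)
Your proposal is correct and follows essentially the same argument as the paper's proof: parametrize perturbations along the gradient direction $\JJ^\tp/\|\JJ\|$, evaluate the resulting scalar quadratic at the endpoint $2|C(\zz)|/\|\JJ\|$, use the hypothesis \eqref{eq: C(z) above} to force a sign change, and apply the intermediate value theorem. The only cosmetic difference is that the paper handles both signs of $C(\zz)$ at once by taking $\lambda=-2C(\zz)/\|\JJ\|$ rather than splitting into cases, which is exactly the unification you suggest in your last paragraph.
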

The assumption should be intuitively understood as the model being close enough to linear in the direction of $\JJ$ locally around the input $\zz$, relative to the size of $C(\zz)$. 
Note that from the Proposition, we also have
\begin{align}
    \|\ee^S-\ee^G\|\le \|\ee^S\|+\|\ee^G\|\le3\|\ee^S\|. 
\end{align}

%\VL{Can also be that you are close to the constraint set, i.e. C(\zz) small. (relative to the slope). Could potentially use the stronger assumption $\|\JJ\|^2 \ge 2\rho|C(\zz)|$ that might be better for presentation. Edit: ah or this is exactly the orange region below. I see} \FR{This constraints is harder to compute so I think it is actually less natural.}

\begin{proof}
    %If $C(\zz)=0$, then $\ee^S=\ee^G=0$. Assume that $C(\zz)\neq0$. 
    Define 
    %\begin{align}
     $   \ee(\lambda):=\frac{\lambda}{\|\JJ\|} \JJ^\tp$
    %\end{align}
    for $\lambda\in\mathbb{R}$ and consider
    %We wish to find a solution $\lambda^*$ to 
    \begin{equation}\label{eq: C z ee}
         C(\zz + \ee(\lambda))= C(\zz) + \|\JJ\| \lambda + \frac{\JJ \Vec{H} \JJ^\tp}{2\|\JJ\|^2} \lambda^2,
    \end{equation}
     which is a quadratic polynomial in $\lambda$.
    %In this way we parametrize the line spanned by $\JJ^\tp$ and we are looking for $\lambda^*$ such that $C(\zz+\ee(\lambda^*))=0$. 
 %   Then, $\|\ee^G\|\le \|\ee(\lambda^*)\|=|\lambda^*|$ by construction. 
%
%\VL{Remove this maybe (comes again later in the text):\textit{
 %   We find $\lambda^*$ by solving the equation
 %   \begin{equation}
  %      0 = C(\zz + \ee(\lambda))= C(\zz) + \|\JJ\| \lambda + \frac{\JJ H \JJ^\tp}{2\|\JJ\|^2} \lambda^2,
   % \end{equation}
   % which is quadratic in $\lambda$ as long as $\JJ H\JJ^\tp\neq 0$.
    %}}
    %We find $\lambda^*$ by solving the equation
    %\begin{equation}
     %   0 = C(\zz + \ee(\lambda))= C(\zz) + \|\JJ\| \lambda + \frac{\JJ H \JJ^\tp}{2\|\JJ\|^2} \lambda^2,
    %\end{equation}
    %which is quadratic in $\lambda$ as long as $\JJ H\JJ^\tp\neq 0$.
    Evaluating this polynomial at $\lambda= -2C(\zz)/\|\JJ\|$, we get 
    \begin{align}
        -C(\zz)+2\frac{\JJ \Vec{H}\JJ^\tp}{\|\JJ\|^4}C(\zz)^2.
    \end{align}
    The absolute value of the second term can by assumption be bounded from above by $|C(\zz)|$.
    %This means that if $C(\zz) > 0$ then
    %$$C(\zz+\ee( -2C(\zz)/\|J\|))\le 0 $$
    This means that either $C(\zz)\ge 0$ and $C(\zz+\ee( -2C(\zz)/\|\JJ\|))\le 0$ or the other way around.
    By continuity of polynomials, 
    there must exist a solution $\lambda^*$ to \eqref{eq: C z ee} in the interval $\Big[0,-2|C(\zz)|/\|\JJ\|\Big]$.
    %there is a $\lambda^*$ solution to $C(\zz+\ee(\lambda^*))=0$ in the interval $\Big[0,-2|C(\zz)|/\|J\|\Big]$.
    Since $\ee^G$ is the smallest vector satisfying this, we have
    \begin{equation}
    \|\ee^G\|\le \|\ee(\lambda^*)\|=|\lambda^*| \le 2|C(\zz)|/\|\JJ\| = 2\|\ee^S\|.
    \end{equation}
\end{proof}

In order to get a sharper bound in the case that $\JJ \vec{H}\JJ^\tp\neq 0$, we may instead solve the quadratic equation \eqref{eq: C z ee} directly: 
    \begin{equation} \label{eq:lambda_sol}
        \lambda^*= \frac{-\|\JJ\|^3 \pm \|\JJ\|\sqrt{\|\JJ\|^4-2C(\zz)\JJ \Vec{H} \JJ^\tp}}{\JJ \vec{H} \JJ^\tp}.
    \end{equation} 
    Let $\lambda^*$ be the solution with smallest absolute value, then $\|\ee^G\|\le |\lambda^*|$ and, following from the proof of \Cref{prop: eG bound}, $ |\lambda^*|\le 2\|\ee^S\|$.
    Note that if $\JJ \Vec{H}\JJ^\tp = 0$, then $\lambda^* = -C(\zz)/\|\JJ\|$ and we directly get $\|\ee^G\| \le \|\ee^S\|$.

To summarize we have the following inequalities
\begin{equation}
    \frac1{2}\|\ee^G\| \le
    \frac1{\tau} \|\ee^G\| \le
    \|\ee^S\| \le \|\ee^G \| + \frac{\rho}{2\|\JJ\|}\|\ee^G\|^2,
\end{equation}
where $\tau = |\lambda^*/C(\zz)|\,\|\JJ\|$ and $\lambda^*$ is given by \eqref{eq:lambda_sol}.

%\[
%\frac1{2} \mathcal{E}_G \le \mathcal{E}_S \le \mathcal{E}_G + \frac1{2}\frac{\|\vec{H}\|}{\|\JJ\|}\mathcal{E}_G^2
%\]

The hypothesis in \Cref{prop: eG bound} defines a region where the geometric error is bounded linearly by the Sampson error. We highlight that this region contains the region defined by
\begin{equation}\label{eq:eG_relaxed_bound}
    \|\ee^S\| = \frac{|C(\zz)|}{\|\JJ\|}\leq \frac{\|\JJ\|}{2\rho}.
\end{equation}
\begin{example}\label{ex:ellipse}

    %The results in the previous propositions hold for any constraint defined by a quadric. 
    Consider as a constraint the conic $C(\xx)=x_1^2+2x_2^2-4$ in $\reals^2$. For fixed $\zz=(z_1,z_2)$, the minimization problem is %then 
    \begin{align} 
        \min_{\ee}\quad & \norm{\ee}^2 \\
        \text{s.t. }\quad & (z_1+\varepsilon_1)^2+2(z_2+\varepsilon_2)^2-4=0 %\nonumber
    \end{align}
    and the linearized constraint is
    \begin{equation}
        z_1^2+2z_2^2-4 + \begin{bmatrix}
             2z_1& 4z_2
         \end{bmatrix}\begin{bmatrix}
             \varepsilon_1 \\ \varepsilon_2
         \end{bmatrix}=0.
    \end{equation}
    % \begin{align} 
    %     \min_{\ee}\quad & \norm{\ee}^2 \\
    %     \text{s.t. }\quad & z_1^2+2z_2^2-4 + \begin{bmatrix}
    %         2z_1& 4z_2
    %     \end{bmatrix}\begin{bmatrix}
    %         \varepsilon_1 \\ \varepsilon_2
    %     \end{bmatrix}=0. \nonumber
    % \end{align}
    From \Cref{prop: eG bound}, we have that $\|\ee^G\|\leq 2\|\ee^S\|$ if $\zz$ is in the colored region of~\Cref{fig:1const_ellipse} (both the orange and purple region). Finally, the relaxed condition from~\eqref{eq:eG_relaxed_bound} gives a smaller region, potentially easier to work with, where the previous bounds also hold (purple region in \Cref{fig:1const_ellipse}) %\VL{Update colors}
    \begin{figure}
        \centering
        %\begin{subfigure}[b]{0.45\textwidth}
        %\includegraphics[scale=0.58]{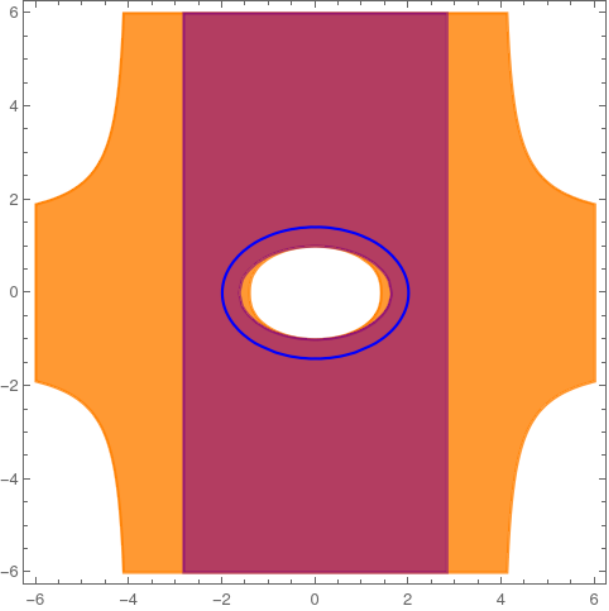}
        %\includegraphics[width=0.4\textwidth]{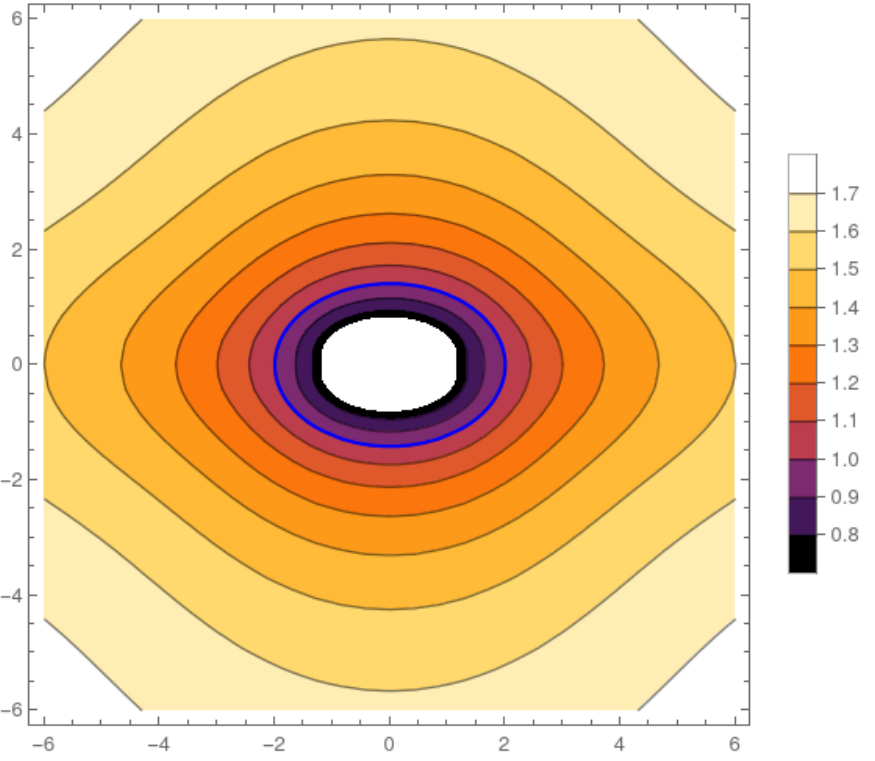}
        \includegraphics[height=0.45\columnwidth]{figs/One_constraint/SE_regions_ellipse.pdf}
        \includegraphics[height=0.45\columnwidth]{figs/One_constraint/SE_heatmap_ellipse.pdf}
        
        %\end{subfigure}
        % \begin{subfigure}[b]{0.45\textwidth}
        %     \includegraphics[scale=0.3]{figs/One_constraint/SE_true_regions_ellipse.pdf}
        % \includegraphics[scale=0.3]{figs/One_constraint/SE_relaxed_regions_ellipse.pdf}
        % \caption{On the left, the true regions arising from Propositions \ref{prop: eG bound} and \ref{prop:eG_continuity_bound}. On the right, the regions obtained from the relaxations. }
        % \end{subfigure}
        \caption{\footnotesize Regions obtained for \Cref{ex:ellipse}. On the left, the constraint  $x_1^2+2x_2^2-4=0$ is depicted in blue, the purple region is obtained from \Cref{eq:eG_relaxed_bound} and it is contained in the orange region coming from \Cref{eq:eG_bound}. On the right, the level sets for the ratio $\|\ee^G\|/\|\ee^S\|$, and the constraint depicted in blue. Observe that, although the ratio changes, it is bounded by 2 for every point in the colored regions.  }%On the right, the orange region from Propositions \ref{prop: eG bound} and \ref{prop:eG_continuity_bound}, and the red region corresponds to the relaxation given by \Cref{eq:eG_relaxed_bound}. \VL{maybe show a heatmap of the approximation errors ?} \FR{Perhaps the left plot is superfluous}}
        \label{fig:1const_ellipse}
    \end{figure}
    
\end{example}

%%%%%%%%%%%%%%%%%%%%%%%%%%%%%%%%%%%%%%%%%%%%%%%%%%%%%%%%%%%%%%%%%%%%%%%%%%%%%%%%%%%%%%%%%%%%%%%%%%%%%%%%%%%%%%%%%%%%%%%%%%%%%%%%%%%%%%%%%%%%%%%%%%%%%%%%%%%%%%%%%%%%%%%%%%%%%%%%%%%%%%%%%%%%%%%%%%%%%%%%%%%%%%%%%

\subsection{One Polynomial Constraint} \label{ss: high deg - 1 const}

To understand the Sampson error for a model defined by a single polynomial constraint of any degree $C:\RR^n\to \RR$, we extend the method presented previously and consider Taylor approximations of degree $d$:
\begin{align}
    C(\zz+\ee)=C(\zz)+\sum_{i=1}^d \frac{1}{i!} \ee \times \mathcal T_i, 
\end{align}
where $\mathcal T_i$ is a symmetric $n\times \cdots \times n$ %\AT{Is this $N$ related to the number of constraints in the multiple constraint case?} 
tensor of order $i$, and 
\begin{align}
    \ee \times \mathcal T_i=\sum_{j_1,\ldots,j_i\in [n]} (\mathcal T_i)_{j_1,\ldots,j_i}\varepsilon_{j_1}\cdots \varepsilon_{j_i}.
\end{align}
For example, $\mathcal T_1=\JJ$ is the Jacobian and $\mathcal T_2=\Vec{H}$ the Hessian.

\begin{proposition}\label{prop: eG d bound} When the optimization problem \eqref{eq:geometric_err} only has one polynomial constraint of degree $d$, 
\begin{align}\label{eq: C(z) above T}
       \JJ\neq 0 \textnormal{ and } |C(\zz)|\ge\Big|\sum_{i=2}^d(-1)^i \frac{2^iC(\zz)^i}{i!\|\JJ\|^i} \JJ\times \mathcal T_i\Big|,
    \end{align}
then 
%\begin{align}
$
\|\ee^G\|\le 2\|\ee^S\|.
$
%\end{align}
\end{proposition}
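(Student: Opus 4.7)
The plan is to extend the one-dimensional continuity argument from the quadratic proof of \Cref{prop: eG bound} to arbitrary degree $d$. The Sampson error still simplifies to $\|\ee^S\| = |C(\zz)|/\|\JJ\|$ (the derivation in \Cref{sec:classic_sampson} uses only the linearization and is insensitive to the degree of $C$), so the target inequality $\|\ee^G\|\le 2\|\ee^S\|$ reduces to exhibiting a feasible perturbation along the gradient direction whose length is at most $2|C(\zz)|/\|\JJ\|$.

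The first step is to parametrise the ray $\ee(\lambda)=\tfrac{\lambda}{\|\JJ\|}\JJ^\tp$ and substitute into the Taylor expansion of degree $d$. Because each slot of $\ee(\lambda)$ contributes a factor of $\lambda/\|\JJ\|$, each tensor contraction satisfies $\ee(\lambda)\times\mathcal T_i=(\lambda/\|\JJ\|)^i\,\JJ\times\mathcal T_i$, so
\begin{equation}
C(\zz+\ee(\lambda))=C(\zz)+\|\JJ\|\lambda+\sum_{i=2}^d\frac{\lambda^i}{i!\,\|\JJ\|^i}\,\JJ\times\mathcal T_i,
\end{equation}
which is a univariate polynomial in $\lambda$. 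Note that the linear term collapses to $\|\JJ\|\lambda$ exactly as in the quadratic case, which is the feature that makes the endpoint choice work cleanly.

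The second step is to evaluate this polynomial at the endpoint $\lambda_0=-2C(\zz)/\|\JJ\|$. The constant plus linear contribution becomes $-C(\zz)$, and the tail reduces to precisely the expression bounded by hypothesis~\eqref{eq: C(z) above T} (after matching powers of $\|\JJ\|$). By that hypothesis, the tail is dominated in absolute value by $|C(\zz)|$, so $C(\zz+\ee(\lambda_0))$ and $C(\zz)$ have opposite (non-strict) signs. Polynomial continuity and the intermediate value theorem then yield some $\lambda^\ast$ between $0$ and $\lambda_0$ with $C(\zz+\ee(\lambda^\ast))=0$. Since $\ee^G$ is the shortest feasible perturbation, $\|\ee^G\|\le \|\ee(\lambda^\ast)\|=|\lambda^\ast|\le 2|C(\zz)|/\|\JJ\|=2\|\ee^S\|$.

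The main obstacle I anticipate is the bookkeeping around the higher-order remainder. In the quadratic case a single spectral-radius inequality tames $\ee^\tp\vec{H}\ee$ for arbitrary $\ee$, whereas here the remainder is a genuine degree-$d$ polynomial in $\lambda$ and only simplifies after we commit to the gradient direction; the hypothesis~\eqref{eq: C(z) above T} is engineered precisely so that substituting $\lambda_0$ collapses the tensorial tail into a single controllable scalar. Once the sign conventions and the $\|\JJ\|$ exponents are verified, the rest of the argument is a direct repetition of the quadratic proof.
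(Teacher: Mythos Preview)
Your proposal is correct and follows essentially the same approach as the paper: parametrise along the gradient direction $\ee(\lambda)=\tfrac{\lambda}{\|\JJ\|}\JJ^\tp$, evaluate the degree-$d$ Taylor expansion at $\lambda_0=-2C(\zz)/\|\JJ\|$, use hypothesis~\eqref{eq: C(z) above T} to force a sign change between $C(\zz)$ and $C(\zz+\ee(\lambda_0))$, and apply the intermediate value theorem exactly as in the quadratic case. The paper's own proof is a one-line reference back to \Cref{prop: eG bound}, so you have in fact supplied more detail than the authors.
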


\begin{proof} The proof is exactly the same as for \Cref{prop: eG bound}; one checks that $C(\zz)\le 0$ and $C(\zz+\ee(\lambda))\ge 0$ (or the other way around) for $\lambda =-2C(\zz)/\|\JJ\|$.
\end{proof}

%%%%%%%%%%%%%%%%%%%%%%%%%%%%%%%%%%%%%%%%%%%%%%%%%%%%%%%%%%%%%%%%%%%%%%%%%%%%%%%%%%%%%%%%%%%%%%%%%%%%%%%%%%%%%%%%%%%%%%%%%%%%%%%%%%%%%%%%%%%%%%%%%%%%%%%%%%%%

\subsection{Multiple Polynomial Constraints}\label{ss: many const} For mathematical models defined by multiple constraints, the Sampson error and its relation to the geometric error are more involved. Here, we give an overview of our approach to computing Sampson errors in practice and studying its relation to the geometric error for polynomial constraints, but leave mathematical statements and proofs in the Supplementary Material.
%Here, we discuss general ideas and leave mathematical statements and proofs in the Supplementary Material. 

%Given $N$ constraints $\CC(\zz)=(C_1(\zz),\ldots,C_N(\zz))$ with $\zz=(z_1,\ldots ,z_n)$, assume that these constraints define a variety $X$ (a zero set of a system of polynomials) of dimension $d$. Intuitively, the more constraints we have, the smaller is linearized constraint set. Therefore, if we $n$ is much greater than $N-d$, the Sampson approximation is likely to be poor. To remedy this, note that it is a fact of algebraic geometry that locally around a generic point of $X$, the variety is described by precisely $N-d$ polynomials, whose Jacobian is full-rank. We therefore propose to perform Sampson approximation by first choosing a subset of constraints $C_i$ of size $N-d$ for which the Jacobian is full-rank. It can also be beneficial to choose these constraints of lowest possible degree. 
%Given $N$ polynomial constraints $\CC(\zz)=(C_1(\zz),\ldots,C_N(\zz))$, 

Assume that  $\CC(\zz)=(C_1(\zz),\ldots,C_N(\zz))$ are $N$ polynomial constraints. The model $\CC(\zz)=\Vec{0}$, for $\zz\in \RR^n$, is an algebraic variety $X$ (a zero set of a system of polynomials) of some dimension $m\le n$, which depends on the constraints. 
For generic constraints of fixed degrees $m=n-N$, i.e. each constraint lowers the dimension of the model by one, however, for specific systems of polynomials this equality does not necessarily hold. In both cases, the intuitive behaviour is that the more constraints we have, the smaller $m$ is and our data is harder to fit to the model. The same phenomenon occurs with the linearized constraints for the Sampson error: The more constraints we have, the smaller is the affine linear space defined by $\CC(\zz)+\JJ\ee=0$. Therefore, if the number of constraints $N$ is much greater than the codimension $n-m$, the Sampson approximation is likely to be poor. To remedy this, we propose to use the fact that locally around a generic point of $X$, the model is described by precisely $n-m$ polynomials whose Jacobian has full rank. This is a result coming from algebraic geometry. Our proposal is to perform Sampson approximation by choosing a subset of $n-m$ constraints whose Jacobian is full-rank and linearizing those constraints. In \Cref{s: 3-view} we try this approach for the Three-View Sampson error and the results suggest that it is also beneficial to choose these constraints with smallest degree possible. 

%We claim that it should also be beneficial to choose these constraints to be of smallest possible degree, which our experiments in \Cref{s: 3-view} motivate.% that it can also be beneficial to choose these constraints with the lowest possible degree.
%It can also be beneficial to choose these constraints of lowest possible degree. 
%We show this empirically in the experiment in \Cref{s: 3-view}.

%The next question we are concerned with is how to generalize the results of this section to multiple bounds. We can essentially generalize \Cref{prop: eS lower} directly to multiple constraints of any degrees. For upper bounds of $\|\ee^G\|$, we restrict to quadratic constraints and data points $\zz$ such that the Jacobian $\JJ$ at $\zz$ has linearly independent rows. In this case,
In the Supplementary Material, we generalize \Cref{prop: eS lower} and find an upper bound for $\|\ee^S\|$ in terms of $\|\ee^G\|$ for multiple constraints of any degrees. We also generalize \Cref{prop: eG bound} for quadratic constraints and data points $\zz$ such that the Jacobian $\JJ$ at $\zz$ has linearly independent rows. In this case, under appropriate conditions, we find a $\vl^*$ such that $C_i(\zz+\|\JJ\|\JJ^\dagger \vl^*)=0$ for $i=1,\ldots,n$.  %$C_i(\zz+\ee(\vl^*))=0$ for $\ee(\vl)=\|\JJ\|\JJ^\dagger \vl$ and $i=1,\ldots,n$. 
We get 
\begin{align}
    \|\ee^G\|\le \|\|\JJ\|\JJ^\dagger \vl^*\|\le \|\JJ\|\|\JJ^\dagger\|\|\vl^*\|,
\end{align}
and by construction we get an upper bound for this $\|\vl^*\|$ expressed in terms of $\CC$, its Jacobian and its Hessian.

\section{Experimental Evaluation} \label{sec:experiments}
In the following sections we evaluate the Sampson approximation for different geometric estimation problems.
First, in Section~\ref{s: 2-view} we evaluate the classical Sampson error for two-view geometry.
Next, in Section~\ref{s: 3-view} we consider the analogous error in the three-view setting. Section~\ref{s:vp} considers line segment to vanishing point errors. Finally, Section~\ref{s:2d3d} show an application from absolute pose estimation with uncertainties applied in both 2D and 3D.

%%%%%%%

\subsection{Application: Two-view Relative Pose} \label{s: 2-view}
We first consider the classical setting where the Sampson approximation is applied, two-view relative pose estimation.
For the experiments we use image pairs from the IMC Phototourism 2021~\cite{jin2021imc} (SIFT), MegaDepth-1500~\cite{sun2021loftr} (SP+LG~\cite{lindenberger2023lightglue}) and ScanNet-1500~\cite{sarlin2020superglue} (SP+SG~\cite{sarlin2020superglue}).
For each image pair we estimate an initial essential matrix using DLT~\cite{hartley2003multiple} applied to the ground truth inliers (from~\cite{lindstrom2010triangulation}).

\noindent\textbf{Results and Discussion.}
Figure~\ref{fig:relpose_err_dist} shows the distribution of the difference between the Sampson approximation and the true error,
and in Table~\ref{tbl:relpose_err} shows the Area-Under-Curve\footnote{Area under the CDF up to 1 px error as a ratio of the complete square.} up to 1 pixel, on the \textit{British Museum} scene from IMC-PT.
For comparison we also include the symmetric epipolar error (distances to the epipolar lines computed in both images) which is another popular choice in practice.
The Sampson error provides a very accurate approximation of the true reprojection error.
As discussed in Section~\ref{sec:bounds} the quality of the approximation depends on the how close the initial point is to the constraint set (small $C(\zz)$) and the curvature (small Hessian). 
In Figure~\ref{fig:relpose_err_vs_bnd} we plot the approximation error $|\mathcal{E}_S-\mathcal{E}_G|$ against $\rho|C(\zz)|/\|J\|^2$, where $\rho$ is the spectral norm of the Hessian. Consistent with the theory, the figure shows a clear trend where correspondences with smaller $\rho|C(\zz)|/\|J\|^2$ have smaller errors.

Finally we also evaluate the error in the context of pose refinement on all three datasets.
Table~\ref{tbl:new_experiment} shows the resulting pose errors (max of rotation and translation error) after non-linear refinement of the initial essential matrix using different error functions. \textit{Algebraic} is the squared epipolar constraints. \textit{Cosine} is the squared cosine of the angles between the normals of the epipolar planes and the point correspondences.
The Sampson error provides the most accurate camera poses after refinement.

\begin{figure}[ht]
\centering
\scalebox{.7}{\input{figs/relative_pose}}
\caption{\footnotesize\textbf{Approximation gap for two-view relative pose.} Comparison with optimal triangulation~\cite{lindstrom2010triangulation}. The unit is in pixels. Here $\mathcal E$ refers to either the Sampson or the symmetric epipolar error.}
\label{fig:relpose_err_dist}
~\\~
\begin{tabular}{l c c c} \toprule
          & \multicolumn{3}{c}{$|\mathcal{E}-\mathcal{E}_G|$, ~~AUC @ $\tau$ px ($\uparrow$)} \\ \cmidrule{2-4}
    Error      & $\tau = 0.1$ & $\tau = 0.5$ & $\tau = 1$ \\ \midrule
Sampson   & \bf 0.991 &\bf 0.998 &\bf 0.999 \\
Symmetric Epipolar & 0.620 & 0.839 & 0.902 \\ \bottomrule
\end{tabular}
\captionof{table}{\footnotesize The table shows the Area-Under-Curve of the approximation error $|\mathcal{E}-\mathcal{E}_G|$ up to different thresholds.}
\label{tbl:relpose_err}
\end{figure}

\begin{figure}
\centering
\scalebox{.7}{% This file was created with tikzplotlib v0.10.1.
\begin{tikzpicture}

\definecolor{dimgray85}{RGB}{85,85,85}
\definecolor{gainsboro229}{RGB}{229,229,229}
\begin{scope}
\begin{axis}[
scale=0.9,
%width=0.9\columnwidth,
%height=0.9\columnwidth,
axis background/.style={fill=gainsboro229},
axis line style={white},
tick align=outside,
tick pos=left,
x grid style={white},
xmajorgrids,
xlabel={$\text{log}_{10}~~{\rho|C(z)|}/{\|J\|^2}$},
ylabel={$\text{log}_{10} |\mathcal{E}_S - \mathcal{E}_G|$},
xmin=-9.57973134926437, xmax=0.157237353875922,
xtick style={color=dimgray85},
y grid style={white},
ymajorgrids,
ymin=-17.5771592392966, ymax=1.92424271875102,
ytick style={color=dimgray85}
]
\addplot graphics [includegraphics cmd=\pgfimage,xmin=-9.57973134926437, xmax=0.157237353875922, ymin=-17.5771592392966, ymax=1.92424271875102] {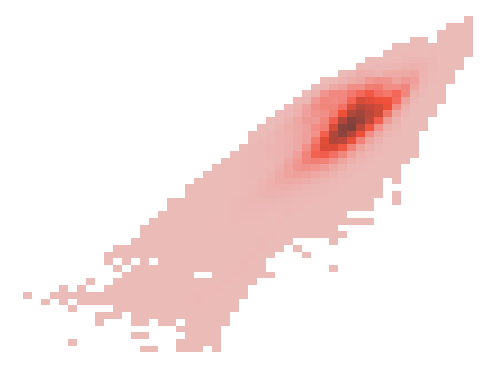};
\end{axis}
\end{scope}

\end{tikzpicture}}
\caption{\footnotesize\textbf{Approximation error against $\rho|C(\zz)|/\|J\|^2$ for two-view relative pose.} Figure shows a heatmap built from the inlier correspondences of $\sim$5k image pairs from the \textit{British Museum} scene. Points that are either close to satisfying the epipolar constraint ($C(\zz)\approx 0$) or have low curvature ($\rho$) have smaller errors.}
\label{fig:relpose_err_vs_bnd}
\end{figure}

\iffalse
\begin{table}[ht]
\centering
\begin{tabular}{l c c c} \toprule
& \multicolumn{3}{c}{Pose error AUC ($\uparrow$)} \\ \cmidrule{2-4}
& @5$^\circ$ &  @10$^\circ$ & @20$^\circ$ \\ \midrule
%Initial estimate (DLT) & 18.43 & 29.15 & 41.41\\ 
%~\rotatebox[origin=c]{180}{$\Lsh$} Algebraic & 37.54 & 45.86 & 53.26 \\
%~\rotatebox[origin=c]{180}{$\Lsh$} Cosine & 45.49 & 54.08 & 60.83 \\
%~\rotatebox[origin=c]{180}{$\Lsh$} Sym. Epipolar & 46.46 & 55.24 & 62.05 \\
%~\rotatebox[origin=c]{180}{$\Lsh$} Sampson & \bf 46.79 & \bf 55.67 & \bf 62.45 \\ 
Initial estimate (DLT) & 0.184 & 0.292 & 0.414\\ 
~\rotatebox[origin=c]{180}{$\Lsh$} Algebraic & 0.375 & 0.459 & 0.533 \\
~\rotatebox[origin=c]{180}{$\Lsh$} Cosine & 0.455 & 0.541 & 0.608 \\
~\rotatebox[origin=c]{180}{$\Lsh$} Sym. Epipolar & 0.465 & 0.552 & 0.621 \\
~\rotatebox[origin=c]{180}{$\Lsh$} Sampson & \bf 0.468 & \bf 0.557 & \bf 0.625 \\ 
\bottomrule
\end{tabular}
\caption{\footnotesize\textbf{Comparison of losses for two-view relative pose refinement on IMC British Museum.} The initial pose is found by linear estimate (DLT) applied to the inlier correspondences (w.r.t.~ground truth pose), followed by non-linear refinement.}
\label{tbl:imc_bc}
\end{table}
\fi

\begin{table}[ht]
\begin{tabular}{l c c c} \toprule
%& \multicolumn{3}{c}{Pose error AUC ($\uparrow$)} \\ \cmidrule{2-4}
%& @5$^\circ$ &  @10$^\circ$ & @20$^\circ$ \\ \midrule
AUC$@10^\circ$ pose error & IMC-PT & MD1.5k & SN1.5k \\ \midrule
%Initial estimate (DLT) & 18.43 & 29.15 & 41.41\\ 
%~\rotatebox[origin=c]{180}{$\Lsh$} Algebraic & 37.54 & 45.86 & 53.26 \\
%~\rotatebox[origin=c]{180}{$\Lsh$} Cosine & 45.49 & 54.08 & 60.83 \\
%~\rotatebox[origin=c]{180}{$\Lsh$} Sym. Epipolar & 46.46 & 55.24 & 62.05 \\
%~\rotatebox[origin=c]{180}{$\Lsh$} Sampson & \bf 46.79 & \bf 55.67 & \bf 62.45 \\ 
Initial estimate (DLT) & 0.361 & 0.321 & 0.274\\ 
~\rotatebox[origin=c]{180}{$\Lsh$} Algebraic &  0.580 & 0.515 & 0.485\\
~\rotatebox[origin=c]{180}{$\Lsh$} Cosine &  0.654 & 0.689 & 0.600\\
~\rotatebox[origin=c]{180}{$\Lsh$} Sym. Epipolar & 0.673 & 0.728 & 0.654\\
~\rotatebox[origin=c]{180}{$\Lsh$} Sampson & \bf  0.678 & \bf 0.732 & \bf 0.657\\ 
\bottomrule
\end{tabular}
%\caption{More results for the experiment in Section~4.1 (main paper). The table shows the pose AUC@10$^\circ$ for the 9 scenes in the IMC-PT 2021 test set with SIFT matches, the ScanNet1500 benchmark with SuperPoint+SuperGlue matches, and the MegaDepth1500 benchmark with SuperPoint+LightGlue matches.}
\vspace{-0.3cm}
\caption{\footnotesize\textbf{Comparison of losses for two-view relative pose refinement on IMC-PT, MegaDepth-1500 and ScanNet-1500.} The initial pose is found by linear estimate (DLT) applied to the inlier correspondences (w.r.t.~ground truth pose), followed by non-linear refinement.}
\label{tbl:new_experiment}
\end{table}

%%%%%%%%%%%%%%%%%%%%%%%%%%

\subsection{Application: Three-view Sampson Error}\label{s: 3-view}
\newcommand{\Ts}{\mathtt{T}}
In this section we evaluate different error formulations for 3-view point matches. The naive baseline is to simply average the two-view Sampson errors \eqref{eq:classic_sampson},  $\quad \mathcal{E}_{pair} =~~$
\begin{equation} \label{eq:3view_pair}
         \mathcal{E}_S(\xx,\xx',E_{12}) + \mathcal{E}_S(\xx,\xx'',E_{13}) +  \mathcal{E}_S(\xx',\xx'',E_{23})
\end{equation}
where $(\xx,\xx',\xx'')$ is the correspondence and $E_{ij}$ are the essential matrices. For a given a trifocal tensor $\mathcal{T} \in \reals^{3\times 3\times 3}$ with slices $\Ts_1, \Ts_2, \Ts_3 \in \reals^{3\times 3}$, a consistent three-view point correspondence $\xx, \xx', \xx''$ satisfies
\begin{equation}\label{eq:3view}
   \CC_9(\xx,\xx',\xx'') = [\hat{\xx}']_\times \left( \sum_k \hat{\xx}_k \Ts_k \right) [\hat{\xx}'']_\times = \vec{0}
\end{equation}
where $\hat{\xx} = [\xx; 1]$ is the homogenization of the 2D point $\xx \in \reals^2$. While we have nine equations, only four are linearly independent.
These can be obtained by multiplying with two $3\times 2$ matrices,
\begin{equation}\label{eq:3view_4}
   \CC_4(\xx,\xx',\xx'') = S_1^\tp [\hat{\xx}']_\times \left( \sum_k \hat{\xx}_k \Ts_k \right) [\hat{\xx}'']_\times S_2
\end{equation}
where $S_1, S_2 \in \reals^{3\times 2}$ contains a basis for the complement of the left and right nullspace of $\CC_9$ respectively. 
Another alternative is to only consider the three pairwise  constraints, 
\begin{equation} \small \label{eq:3view_3}
    \CC_3(\xx,\xx',\xx'') = \left[ \xx'^\tp E_{12} \xx,~\xx''^\tp E_{13}\xx,~\xx''^\tp E_{23}\xx' \right]
\end{equation}
Note that applying the Sampson approximation to $\CC_3$ yields a different error compared to $\mathcal{E}_{pair}$ \eqref{eq:3view_pair}, as the constraints are considered jointly.

In this section we evaluate the following errors
\begin{itemize} \itemsep0pt
    \item $\mathcal{E}_{pair}$ - averaging the pairwise Sampson errors  \eqref{eq:3view_pair}
    \item $\mathcal{E}_{9}$ - applying Sampson approximation to $\CC_9$ \eqref{eq:3view} 
    \item $\mathcal{E}_{4}$ - applying Sampson approximation to $\CC_4$ \eqref{eq:3view_4}
    \item $\mathcal{E}_{3}$ - applying Sampson approximation to $\CC_3$ \eqref{eq:3view_3}
\end{itemize}
We also include two combinations of the above. First taking 3 out of the 4 constraints from $\CC_4$, denoted $\CC_{4:3}$, and one where we combine $\CC_3$ and $\CC_4$ by taking two quadratic and one cubic constraint, denoted $\CC_{4:1,3:2}$.
We also consider a set of psuedo-Sampson approximations which take the form $\|\CC\|^2 / \|\JJ\|^2$, and thus avoid computing matrix inverses as in \Cref{ss: rank-def}. This can be seen as a naive extension of the 1-dimensional Sampson approximation \eqref{eq:lin_geometric_err} to the multi-dimensional case. We denote these as $\hat{\mathcal{E}_9}, \hat{\mathcal{E}_4}, \hat{\mathcal{E}_3}$.

\noindent\textbf{Experiment setup.}
To compare the approximations we generate synthetic camera triplets (70$^\circ$ field-of-view, 1000x1000 pixel images) observing a 3D point.
To the three projections we add normally distributed noise with standard deviation $\sigma\in\{1,5,10\}$ px.
We obtain the reference ground-truth reprojection error by directly optimizing over the 3D point. 
For each of the evaluated approximated error functions we compute the difference to the ground-truth.

\noindent\textbf{Results and Discussion.}
Table~\ref{tbl:trifocal} shows the errors for 100k synthetic  instances.
We compute the Area-Under-Curve up to 1 pixel deviation from the reference error.
The Sampson approximation of the epipolar constraints $\CC_3$ \eqref{eq:3view_3} yields the best approximation.
In particular, we can see that the naive approximation $\mathcal{E}_{pair}$ that averages the pairwise Sampson errors, is significantly worse.
Interestingly, we can also see that $\CC_4$ performs much worse compared to both $\CC_3$ and the mixed variants $\CC_{4:3}$ and $\CC_{4:1,3:2}$. 
This is consistent with the discussion in \cref{ss: many const}.

\begin{table}[ht]
\centering
\resizebox{\columnwidth}{!}{
\begin{tabular}{l c c c} \toprule
    &  \multicolumn{3}{c}{$|\mathcal{E}-\mathcal{E}_{gt}|$,\quad AUC @ 1px $(\uparrow)$} \\ \cmidrule{2-4}
    & $\sigma=1$px & $\sigma=5$px & $\sigma=10$px \\ \midrule
$\mathcal{E}_3 = \|J_3^\dagger\CC_3\|$      & \bf 0.998 & \bf 0.961 & \bf 0.882 \\
$\mathcal{E}_{4:1,3:2}= \small \|J_{4:1,3:2}^\dagger\CC_{4:1,3:2}\|$   &    0.995 & 0.937 & 0.830 \\

$\mathcal{E}_{4:3}= \|J_{4:3}^\dagger\CC_{4:3}\|$       & 0.994 & 0.912 & 0.768 \\
$\mathcal{E}_4= \|J_4^\dagger\CC_4\|$      & 0.765 & 0.481 & 0.353 \\
$\mathcal{E}_{pair} = ... $ \eqref{eq:3view_pair} & 0.764 & 0.312 & 0.172 \\
$\hat{\mathcal{E}_3} = \|\CC_3\|/\|J_3\|$ & 0.361 & 0.016 & 0.003 \\
$\hat{\mathcal{E}_4} = \|\CC_4\|/\|J_4\|$ & 0.356 & 0.009 & 0.001 \\
$\hat{\mathcal{E}_9} = \|\CC_9\|/\|J_9\|$ & 0.356 & 0.009 & 0.001 \\
$\mathcal{E}_9 = \|J_9^\dagger\CC_9\| $      & 0.348 & 0.075 & 0.034 \\
%alg                  & 0.175 & 0.035 & 0.018 \\
\bottomrule
\end{tabular}
}
\caption{\footnotesize\textbf{Comparison of three-view reprojection error approximations.} Each error metric is compared to the ground truth error (found by non-linear refinement of the 3D point). To each keypoint we add noise with standard deviation $\sigma$ pixels.}
\label{tbl:trifocal}
\end{table}

%%%%%%%%%%%%%%%%%%%%%%%%%%%%%%%%%%%%%%%%%%%%%%%%%%%%%%%%%%%%%%%%%%%%%%%%%%%%%%%%%%%%%%%%%%%%%%%%%%%%%%%%%%%%%%%%%%%%%%%%%%%%%%%%%%%%%%%%%%%%%%%%%%%%%%%%%%%%%%%%%%%%%%%%%

%%%%%%%%%%%%%%%%%%%%%%%%%%%%%%%%%%%%%%%%%%%%%%%%%%%%%%%%%%%%%%%%%%%%%%%%%%%%%%%%%%%%%%%%%%%%%%%%%%%%%%%%%%%%%%%%%%%%%%%%%%%%%%%%%%%%%%%%%%%%%%%%%%%%%%%%%%%%%%%%%

\begin{table*}[ht]
    \centering
    \resizebox{\textwidth}{!}{
    \begin{tabular}{l ccc l ccc l ccc r} \toprule
         %& \multicolumn{2}{c}{Chess} & \multicolumn{2}{c}{Fire} & \multicolumn{2}{c}{Heads} & \multicolumn{2}{c}{Office} & \multicolumn{2}{c}{Pumpkin} & \multicolumn{2}{c}{Redkitchen} & \multicolumn{2}{c}{Stairs} & \multicolumn{3}{c}{Average} \\
         & \multicolumn{3}{c}{$\tau = 5$ px} && \multicolumn{3}{c}{$\tau = 10$ px} && \multicolumn{3}{c}{$\tau = 20$ px} &  \\ \cmidrule{2-4} \cmidrule{6-8} \cmidrule{10-12}
         & AUC $(\uparrow)$ & $\varepsilon_{R}$ $(\downarrow)$ & $\varepsilon_{t}$ $(\downarrow)$ && AUC $(\uparrow)$ & $\varepsilon_{R}$ $(\downarrow)$ & $\varepsilon_{t}$ $(\downarrow)$  && AUC $(\uparrow)$ & $\varepsilon_{R}$ $(\downarrow)$ & $\varepsilon_{t}$ $(\downarrow)$ & RT  \\ \midrule        
     Reprojection error \eqref{eq:normal_bundle} &  0.358 &  1.03 & 3.20 && 0.350 &  1.06 & 3.24 && 0.311 &  1.21 & 3.63  & 0.7 ms  \\
     Reprojection error + Cov. \eqref{eq:full_bundle} & \bf 0.367 & \bf  1.01 &\bf  3.10 && \bf 0.379 &   0.99 & \bf 2.99 && \bf 0.378 & \bf  0.99 &\bf  3.00 & 10.1 ms \\
     ~\rotatebox[origin=c]{180}{$\Lsh$} Sampson approximation &\bf  0.367 &\bf   1.01 &\bf  3.10 &&\bf  0.379 &  \bf 0.98 & \bf 2.99 && 0.375 &  1.00 & 3.02  & 2.4 ms \\\bottomrule
    \end{tabular}}
    \caption{\footnotesize\textbf{Pose refinement on 7Scenes.} Table shows the Area-Under-Curve (AUC) @ 5cm position errors and the median errors in rotation $\varepsilon_R$ (deg.) and translation $\varepsilon_t$ (cm). Results are reported for different inlier thresholds $\tau$. For large thresholds, more uncertain points are included and the improvement from the covariance weighting is more significant. However, larger errors also make the linearization point (the original correspondence) worse, degrading the Sampson approximation. For the two lower thresholds, the Sampson approximation of 2D-3D covariance weighted reprojection error gives almost identical results as performing the full (expensive) optimization.
    }
    \label{tbl:7scenes}
\end{table*}

\subsection{Application: Vanishing Point Estimation} \label{s:vp}

We now show another example with a 1-dimensional quadratic constraint. 
Consider a line-segment $(\xx_1,\xx_2) \in \reals^2 \times \reals^2$ and a vanishing point $\vec{v} \in \mathbb{S}^2$.
Assuming we want to refine $\vec{v}$, it is reasonable to consider what is the smallest pertubation of the line endpoints such that the line passes through the vanishing point, i.e. satisfy the constraint
\begin{equation}
    C(\xx_1,\xx_2) = \vec{v}^\tp ( \begin{pmatrix} \xx_1 \\ 1\end{pmatrix} \times \begin{pmatrix}  \xx_2 \\ 1\end{pmatrix}  )
\end{equation}
Differentiating with respect to the image points we get,
\begin{equation} \small
    J = 
    \begin{bmatrix}
        \left(\begin{pmatrix} \xx_2 \\ 1\end{pmatrix} \times \vec{v}\right) S, & \left(\vec{v} \times
        \begin{pmatrix} \xx_1 \\ 1\end{pmatrix} \right) S
    \end{bmatrix}
\end{equation}
where $S = \begin{bmatrix} I_2 & \vec{0} \end{bmatrix}^\tp$, and we can directly setup a Sampson approximation of the line-segment to vanishing point distance as $|C(\xx_1,\xx_2)|/\|\JJ\|$.
For this problem the ground-truth error $\mathcal{E}_G$ can be computed in closed form using SVD.
%For this we can directly derive a Sampson approximation, with

% [cross(x2,v)'*[eye(2); 0 0], cross(v,x1)'*[eye(2); 0 0]

%\begin{equation} \small
%H_C = 
%\begin{pmatrix}
%& & & v_3 \\
%& & -v_3 &  \\
%& -v_3 & &  \\
%v_3 & & & \\
%\end{pmatrix}
%\end{equation}

In Section~\ref{ss: quad - 1 const} we derived bounds that relate the true geometric error $\ee^G$ and the Sampson approximation $\ee^S$,
\begin{equation}
    B_l := \frac1{\tau} \le \frac{\|\ee^S\|}{\|\ee^G\|} \le 1 + \frac{\rho}{2\|\JJ\|} \|\ee^G\| =: B_u
\end{equation}
Next we evaluate how tight these bounds are on real data. 

\noindent\textbf{Experiment Setup.}
For the experiment we consider circa 350k pairs of line segments and vanishing points collected from the YUB+~\cite{yorkurban} and NYU VP~\cite{silberman_2012,kluger2020consac}.
The line segments are detected using DeepLSD~\cite{pautrat2023deeplsd} and using Progressive-X~\cite{Barath_2019_ICCV} we estimate a set of vanishing points.

\noindent\textbf{Results and Discussion.}
Figure~\ref{fig:vp_bounds} shows $B_u, B_l$ for each of line-vanishing point pairs and in Figure~\ref{fig:vp_bounds_diff} we show the distribution of the difference between the bounds.
As can be seen in the figures the approximation works extremely well for this setting.
We also experimented with refining the vanishing points using the Sampson error but found that the results are very similar to minimizing the mid-point error (as was done in~\cite{pautrat2023deeplsd}).
The full results and details can be found in the Supplementary Material.

\begin{figure}
\centering
\scalebox{.7}{\input{figs/vp_bounds}}
\caption{\footnotesize\textbf{Evaluation of the bounds for VP-line error.}  The lower bound $B_l$ and upper bound $B_u$ for the $\approx$350k VP-line pairs in the combined dataset. For illustration the pairs are sorted w.r.t.~the tightness of the bound. }
\label{fig:vp_bounds}
\scalebox{.7}{% This file was created with tikzplotlib v0.10.1.
\begin{tikzpicture}

\definecolor{chocolate2267451}{RGB}{226,74,51}
\definecolor{dimgray85}{RGB}{85,85,85}
\definecolor{gainsboro229}{RGB}{229,229,229}
\definecolor{steelblue52138189}{RGB}{52,138,189}

\begin{axis}[
width=\columnwidth,
height=0.6\columnwidth,
axis background/.style={fill=gainsboro229},
axis line style={white},
tick align=outside,
tick pos=left,
x grid style={white},
xlabel=\textcolor{dimgray85}{$\text{log}_{10}(B_u - B_l)$},
xmajorgrids,
xmin=-11.4578341529629, xmax=-0.950240576190655,
xtick style={color=dimgray85},
y grid style={white},
ymajorgrids,
ymin=0, ymax=0.541120741628624,
ytick style={color=dimgray85}
]
\draw[draw=none,fill=chocolate2267451,fill opacity=0.5, very thin] (axis cs:-10.9802162631096,0) rectangle (axis cs:-10.5025983732563,1.74880676121693e-05);
\draw[draw=none,fill=chocolate2267451,fill opacity=0.5, very thin] (axis cs:-10.5025983732563,0) rectangle (axis cs:-10.024980483403,3.49761352243386e-05);
\draw[draw=none,fill=chocolate2267451,fill opacity=0.5, very thin] (axis cs:-10.024980483403,0) rectangle (axis cs:-9.54736259354974,7.57816263194004e-05);
\draw[draw=none,fill=chocolate2267451,fill opacity=0.5, very thin] (axis cs:-9.54736259354974,0) rectangle (axis cs:-9.06974470369645,0.000262321014182539);
\draw[draw=none,fill=chocolate2267451,fill opacity=0.5, very thin] (axis cs:-9.06974470369645,0) rectangle (axis cs:-8.59212681384317,0.000612082366425924);
\draw[draw=none,fill=chocolate2267451,fill opacity=0.5, very thin] (axis cs:-8.59212681384317,0) rectangle (axis cs:-8.11450892398989,0.00165553706728536);
\draw[draw=none,fill=chocolate2267451,fill opacity=0.5, very thin] (axis cs:-8.11450892398989,0) rectangle (axis cs:-7.63689103413661,0.00430789398846437);
\draw[draw=none,fill=chocolate2267451,fill opacity=0.5, very thin] (axis cs:-7.6368910341366,0) rectangle (axis cs:-7.15927314428332,0.0112098513394005);
\draw[draw=none,fill=chocolate2267451,fill opacity=0.5, very thin] (axis cs:-7.15927314428332,0) rectangle (axis cs:-6.68165525443004,0.0253576980376455);
\draw[draw=none,fill=chocolate2267451,fill opacity=0.5, very thin] (axis cs:-6.68165525443004,0) rectangle (axis cs:-6.20403736457676,0.0518346324024698);
\draw[draw=none,fill=chocolate2267451,fill opacity=0.5, very thin] (axis cs:-6.20403736457676,0) rectangle (axis cs:-5.72641947472348,0.106164229117609);
\draw[draw=none,fill=chocolate2267451,fill opacity=0.5, very thin] (axis cs:-5.72641947472348,0) rectangle (axis cs:-5.2488015848702,0.20484356529721);
\draw[draw=none,fill=chocolate2267451,fill opacity=0.5, very thin] (axis cs:-5.2488015848702,0) rectangle (axis cs:-4.77118369501691,0.366992928197243);
\draw[draw=none,fill=chocolate2267451,fill opacity=0.5, very thin] (axis cs:-4.77118369501691,0) rectangle (axis cs:-4.29356580516363,0.500881573836011);
\draw[draw=none,fill=chocolate2267451,fill opacity=0.5, very thin] (axis cs:-4.29356580516363,0) rectangle (axis cs:-3.81594791531035,0.439218647435502);
\draw[draw=none,fill=chocolate2267451,fill opacity=0.5, very thin] (axis cs:-3.81594791531035,0) rectangle (axis cs:-3.33833002545707,0.261265900769938);
\draw[draw=none,fill=chocolate2267451,fill opacity=0.5, very thin] (axis cs:-3.33833002545707,0) rectangle (axis cs:-2.86071213560379,0.0995537395602092);
\draw[draw=none,fill=chocolate2267451,fill opacity=0.5, very thin] (axis cs:-2.86071213560379,0) rectangle (axis cs:-2.3830942457505,0.0178378289644126);
\draw[draw=none,fill=chocolate2267451,fill opacity=0.5, very thin] (axis cs:-2.3830942457505,0) rectangle (axis cs:-1.90547635589722,0.00154477930574162);
\draw[draw=none,fill=chocolate2267451,fill opacity=0.5, very thin] (axis cs:-1.90547635589722,0) rectangle (axis cs:-1.42785846604394,5.24642028365079e-05);
\addplot [line width=1.5pt, chocolate2267451]
table {%
-10.9802162631096 1.8324560061517e-05
-10.883727800513 1.91434329640196e-05
-10.7872393379163 1.40062116554309e-05
-10.6907508753197 1.36864211087045e-05
-10.5942624127231 1.33269522557325e-05
-10.4977739501265 3.12268707429163e-06
-10.4012854875298 1.6950065629606e-05
-10.3047970249332 4.96281959814513e-05
-10.2083085623366 5.56395881648941e-05
-10.11182009974 4.05933796666948e-05
-10.0153316371434 2.38737977661094e-05
-9.91884317454673 6.36252476131153e-05
-9.82235471195011 6.65058279036173e-05
-9.72586624935349 8.03838634420449e-05
-9.62937778675686 0.000107949450057334
-9.53288932416024 0.00011114126940553
-9.43640086156362 0.000197057505915192
-9.339912398967 0.000275248312584977
-9.24342393637037 0.000334996524268612
-9.14693547377375 0.00033843504369304
-9.05044701117713 0.000374225909344772
-8.95395854858051 0.00041602068329518
-8.85747008598388 0.000525722425076681
-8.76098162338726 0.000696529068731383
-8.66449316079064 0.000885881958868526
-8.56800469819401 0.00107296614113949
-8.47151623559739 0.00131584531756609
-8.37502777300077 0.0015944477283189
-8.27853931040415 0.00182566762957087
-8.18205084780752 0.00221189520065659
-8.0855623852109 0.00249475889763639
-7.98907392261428 0.00326047801117759
-7.89258546001766 0.00418845535649521
-7.79609699742103 0.00509548176176085
-7.69960853482441 0.0060928659620836
-7.60312007222779 0.0074770335034292
-7.50663160963117 0.00907828888768389
-7.41014314703454 0.0109908674413151
-7.31365468443792 0.0130421664889862
-7.2171662218413 0.0148806938779518
-7.12067775924467 0.0180295682327471
-7.02418929664805 0.0217069797686581
-6.92770083405143 0.0247615490277235
-6.83121237145481 0.0280194095526063
-6.73472390885818 0.0323748087091113
-6.63823544626156 0.0379083001666801
-6.54174698366494 0.044185929315759
-6.44525852106832 0.0503746207927306
-6.34877005847169 0.0592427901849369
-6.25228159587507 0.0685046942772336
-6.15579313327845 0.0795643754761353
-6.05930467068183 0.0915691369842665
-5.9628162080852 0.104827263432566
-5.86632774548858 0.120669250637444
-5.76983928289196 0.136678494520455
-5.67335082029533 0.155209605153672
-5.57686235769871 0.177901094113302
-5.48037389510209 0.203552069646134
-5.38388543250547 0.234214520125348
-5.28739696990884 0.267373787295538
-5.19090850731222 0.302847082413863
-5.0944200447156 0.337413687602391
-4.99793158211898 0.37128133551518
-4.90144311952235 0.405931411963295
-4.80495465692573 0.440778289992176
-4.70846619432911 0.473560267852145
-4.61197773173249 0.494809640333886
-4.51548926913586 0.51042971848947
-4.41900080653924 0.515353087265356
-4.32251234394262 0.507554229566843
-4.22602388134599 0.488476855698796
-4.12953541874937 0.464109873305903
-4.03304695615275 0.435463185190283
-3.93655849355613 0.402607452874398
-3.8400700309595 0.364819732079173
-3.74358156836288 0.324382799198214
-3.64709310576626 0.287687438219869
-3.55060464316964 0.251902150426121
-3.45411618057301 0.214429604940199
-3.35762771797639 0.17870371882681
-3.26113925537977 0.145992372648035
-3.16465079278315 0.11468326593089
-3.06816233018652 0.0877971920111891
-2.9716738675899 0.0655260584609568
-2.87518540499328 0.0464238686380629
-2.77869694239665 0.032348080973793
-2.68220847980003 0.0203786788475584
-2.58572001720341 0.0125133273078373
-2.48923155460679 0.0080953688010043
-2.39274309201016 0.00503616447039177
-2.29625462941354 0.003088390978216
-2.19976616681692 0.00177148475452993
-2.1032777042203 0.000879026402552121
-2.00678924162367 0.000412654836767162
-1.91030077902705 0.000209240851858436
-1.81381231643043 0.000101268132645136
-1.71732385383381 3.2798274840902e-05
-1.62083539123718 7.59409804773925e-06
-1.52434692864056 1.30110659740777e-05
-1.42785846604394 3.29324116424712e-05
};
\end{axis}

\end{tikzpicture}}
\caption{\footnotesize \textbf{Evaluation of the bounds for VP-line error.} Distribution of the $\text{log}_{10}$ differences between the upper and lower bounds.}
\label{fig:vp_bounds_diff}
\end{figure}

%%%%%%%%%%%%%%%%%%%%%%%%%%%%%%%%%%%%%%%%%%%%%%%%%%%%%%%%%%%%%%%%%%%%%%%%%%%%%%%%%%%%%%%%%%%%%%%%%%%%%%%%%%%%%%%%%%%%%%%%%%%%%%%%%%%%%%%%%%%%%%%%%%%%%%%%%%%%%%%%%

\subsection{Application: 2D/3D Reprojection Error} \label{s:2d3d}

Minimizing the square reprojection error, i.e.~deviation between the observed 2D point and the projection of the 3D point, assumes a Gaussian noise model on the 2D observations.
However, in many scenarios, we also have noise in the 3D points. In this section we consider the case where we have a known covariance for both the 2D and 3D-points.

Given a point correspondence $(\xx,\XX) \in \reals^2 \times \reals^3$, together with covariances $\Sig_{2D} \in \reals^{2\times 2}$ and $\Sig_{3D} \in \reals^{3\times 3}$, the maximum likelihood estimate is then given by
\begin{align}
    \min_{\ee_{2},\ee_{3}}\quad  &  \norm{\ee_{2}}_{\Sig_{2D}}^2 +  \norm{\ee_{3}}_{\Sig_{3D}}^2 \\
    \text{s.t.}\quad  & \CC(\xx + \ee_{2}, \XX + \ee_{3}) = 0 
\end{align}
%
%\begin{align}
%    \min_{\ee_{2}}\quad  &  \norm{\ee_{2}}^2  \\
%    \text{s.t.}\quad  & \CC(\xx + \ee_{2}, \XX) = 0 
%\end{align}
%\begin{align}
%    \min_{\ee_{2},\ee_{3}}\quad  &  \norm{\ee_{2}}^2 +  \norm{\ee_{3}}^2 \\
%    \text{s.t.}\quad  & \CC(\xx + \ee_{2}, \XX + \ee_{3}) = 0 
%\end{align}
%
where $\CC(\xx,\XX)$ encodes the reprojection equations, i.e.~
\begin{equation} \label{eq:reproj_constr}
    \CC(\xx,\XX) = [I_{2\times 2},~ -\xx]\left( R \XX + \vt \right) = 0.
\end{equation}
\noindent\textbf{Experiment Setup.}
For the experiment we consider the visual localization benchmark setup on 7Scenes~\cite{shotton2013scene} dataset.
Using HLoc~\cite{sarlin2019coarse} we establish 2D-3D matches and estimate an initial camera pose for each query image.
For the experiment we then refine this camera pose, including the uncertainty in both the 2D and 3D points. The 2D covariances are assumed to be unit gaussians and to obtain the 3D covariances we propagate the 2D covariances from the mapping images used to triangulate the 3D point.

\noindent\textbf{Results and Discussion.}
Table~\ref{tbl:7scenes} shows the average pose error across all scenes (per-scene results are available in the Supplementary Material). 
We compare only minimizing the reprojection error (only 2D noise)
\begin{equation} \label{eq:normal_bundle}
    \min_{R,\vec{t}} \sum_k \|\xx_k - \pi(R\XX_k + \vec{t})\|^2_{\Sig_2}
\end{equation}
with optimizing over the 3D points as well (2D/3D noise),
\begin{equation} \label{eq:full_bundle}
    \min_{R,\vec{t},\{\hat{\XX}_k\}} \sum_k \|\xx_k - \pi(R\hat{\XX}_k + \vec{t})\|^2_{\Sig_2} + \|\hat{\XX}_k - \XX_k\|^2_{\Sig_3}
\end{equation}
and applying the Sampson approximation to \eqref{eq:reproj_constr}.
As shown in Table~\ref{tbl:7scenes}, including the uncertainty of the 3D point can greatly improve the pose accuracy. 
Further, the Sampson approximation works well in this setting and it is only when we include matches with very large errors (20 pixels) that performance degrades.

Note that the optimization problem in \eqref{eq:full_bundle} requires parameterizing each individual 3D point, potentially leading to hundreds or thousands of extra parameters compared to \eqref{eq:normal_bundle} that only optimize over the 6-DoF in the camera pose.
Since the Sampson approximation eliminates the extra unknowns, it also allows us to only optimize over the camera pose while modelling the 3D uncertainty.
Table~\ref{tbl:7scenes} also shows the average runtime in milliseconds for the query images.
Minimizing the Sampson approximation is significantly faster compared to \eqref{eq:full_bundle}.

\iffalse
\begin{figure}[ht]
    \input{figs/7scenes_runtime}
    \caption{\footnotesize \textbf{Runtime comparison on 7Scenes}. The average runtime in milliseconds for the pose refinement experiment on 7Scenes.}
    \label{fig:7scenes}
\end{figure}
\fi

\section{Conclusions}
The Sampson approximation, originally applied to compute conic-point distances, has shown itself to surprisingly versatile in the context of robust model fitting.
While it has been known that it works extremely well in practice, we provide the first theoretical bounds on the approximation error.
In multiple experiments on real data in different application contexts we have validated our theory and highlighted the usefulness of the approximation.
\\~\\
{\footnotesize
\noindent\textbf{{Acknowledgments:}} Viktor Larsson was supported by the strategic research project ELLIIT and the Swedish Research Council (grant no.~2023-05424). Felix Rydell was supported by the Knut and Alice Wallenberg Foundation within their WASP (Wallenberg AI, Autonomous
Systems and Software Program) AI/Math initiative. Angélica Torres was supported by DFG grant 464109215 within the priority programme SPP 2298 ``Theoretical Foundations of Deep Learning”. 
}
%%%%%%%%% REFERENCES
{\small
\bibliographystyle{ieee_fullname}
\bibliography{main}
}

\appendix
\appendixpage

\section*{Overview}
\label{sec:over}

 In this Supplementary Material, we prove more bounds related to geometric errors
 \begin{align} \label{eq:geometric_err SM}
  \mathcal{E}_{G}^2(\zz,\vec{\theta}) =\min_{\ee}\quad & \norm{\ee}^2 \\
  \text{s.t. }\quad & \CC(\zz + \ee, \vec{\theta}) = 0. \label{eq:geometric_err_C SM}
\end{align}
In \Cref{s: low b}, we generalize \Cref{prop: eS lower} to the most general setting with $N$ constraints of any degree. In \Cref{s: complete int}, we give a result in the spirit of \Cref{prop: eG bound} for $N$ quadric constraints. We restrict to quadric polynomials in the case of multiple constraints (this applies to the epipolar constraints). 

In \Cref{sec:opt} we provide details regarding the optimization of Sampson approximations, and in \Cref{sec:extra_experiment} we show additional results from the experiments in the main paper.

%%%%%%%%%%%%%%%%%%%%%%%%%%%%%%%%%%%%%%%%%%%%%%%%%%%%%%%%%%%%%%%%%%%%%%%%%%%%%%%%%%%%

\section{General Case Lower Bound for $\|\ee^G\|$}\label{s: low b}

In our pursuit to understand constraints of any degrees, we make use of $d$-norms:
\begin{align}
    \|\xx\|_d:=\sqrt[d]{x_1^d+\cdots +x_n^d}.
\end{align}
The $2$-norm $\|\xx\|_2$ is also simply denoted by $\|\xx\|$. For matrices however, $\|\cdot\|$ will refer to the operator norm. We apply the following estimation of polynomials of general degrees.

\begin{lemma}\label{le: q} Let $q:\mathbb R^n\to \mathbb R$ be a homogeneous polynomial in $n$ variables of degree $d$. Then
\begin{align}\label{eq: estim}
    |q(\xx)|\le\mu_q\|\xx\|_d^d, 
\end{align}
where $\mu_q$ is the sum of absolute values of coefficients of $q$.
\end{lemma}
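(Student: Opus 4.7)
The plan is to reduce the bound to the monomial level via the triangle inequality, and then handle a single monomial using the elementary fact that each coordinate $|x_i|$ is bounded by $\|\xx\|_d$. Since $q$ is homogeneous of degree $d$, we can expand it as $q(\xx) = \sum_{|\alpha|=d} c_\alpha \xx^\alpha$, where $\alpha$ ranges over multi-indices of total weight $d$, and $\mu_q = \sum_{|\alpha|=d} |c_\alpha|$ by definition.

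First I would apply the triangle inequality to the expansion, giving
\begin{align}
    |q(\xx)| \le \sum_{|\alpha|=d} |c_\alpha| \,|\xx^\alpha| = \sum_{|\alpha|=d} |c_\alpha| \prod_{i=1}^n |x_i|^{\alpha_i}.
\end{align}
The main step is then to bound a single monomial. For each coordinate, $|x_i|^d \le \sum_{j=1}^n |x_j|^d = \|\xx\|_d^d$, so $|x_i| \le \|\xx\|_d$. Multiplying these bounds over all factors of the monomial yields $\prod_i |x_i|^{\alpha_i} \le \|\xx\|_d^{\sum_i \alpha_i} = \|\xx\|_d^d$, where the last equality uses homogeneity $|\alpha|=d$.

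Substituting this pointwise bound back into the sum gives
\begin{align}
    |q(\xx)| \le \sum_{|\alpha|=d} |c_\alpha| \, \|\xx\|_d^d = \mu_q \, \|\xx\|_d^d,
\end{align}
which is the desired inequality~\eqref{eq: estim}. There is no real obstacle here: the argument is a two-line application of the triangle inequality followed by the trivial bound $|x_i| \le \|\xx\|_d$; the only thing to be careful about is using the $d$-norm (not the $2$-norm) so that the bound on a single coordinate comes out cleanly, and it is precisely this choice that makes homogeneity of degree $d$ interact correctly with the exponent.
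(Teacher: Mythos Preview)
Your proof is correct and essentially the same as the paper's: both rely on the triangle inequality over the monomial expansion together with the key observation $|x_i|\le\|\xx\|_d$. The only cosmetic difference is that the paper first normalizes via homogeneity to the unit $d$-sphere (so each $|y_i|\le 1$ and every monomial is bounded by $1$), whereas you bound each monomial by $\|\xx\|_d^d$ directly without normalizing.
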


This lemma can be extended to non-homogeneous polynomials in a straightforward way, by replacing $\xx$ by $(\xx;1)$ in \eqref{eq: estim}.

\begin{proof} Our first observation is that
\begin{align}
    |q( \xx)|\le  \big(\max_{\|\yy\|_d=1}|q(\yy)|\big)\|\xx\|_d^d. 
\end{align}
For $\|\yy\|_d=1$ we can bound $|q(\yy)|$ from above by the sum of the absolute values of the coefficients of $q$ because, for $x$ with $\|x\|_d=1$, each coordinate $x_i$ has norm $\le 1$ and therefore the norms of monomials are bounded by $1$.
\end{proof}

We extend the notation from the main body of the paper. A set of polynomial constraints  \begin{equation}
    \CC(\zz) = \left(C_1(\zz), C_2(\zz),~\dots,C_N(\zz)\right)^\tp = \vec{0},
\end{equation}
for $\zz\in \mathbb R^n$, can be expressed via a Taylor expansion:
\begin{align}
    C_i(\zz+\ee)=\CC(\zz)+\sum_{j=1}^d \frac{1}{j!} \ee \times \mathcal T_j^{(i)}, 
\end{align}
for each $i$, where $\mathcal T_j^{(i)}$ is a symmetric $n\times \cdots \times n$ tensor of order $j$, and 
\begin{align}\label{eq: tensor times}
    \ee \times \mathcal T_j^{(i)}=\sum_{l_1,\ldots,l_j\in [n]} (\mathcal T_j^{(i)})_{l_1,\ldots,l_j}\varepsilon_{l_1}\cdots \varepsilon_{l_j}.
\end{align}
For example, $\mathcal T_1^{(i)}$ is the Jacobian of $C_i$ and $\mathcal T_2^{(i)}$ is the Hessian of $C_i$.  

As in \eqref{eq: tensor times}, each tensor $\mathcal T_j^{(i)}$ defines a polynomial, and by \Cref{le: q}, 
\begin{align}\label{eq: mu ij}
    |\ee\times \mathcal T_j^{(i)}|\le \mu_{j}^{(i)}\|\ee\|_j^j,
\end{align}
where $\mu_j^{(i)}$ is the sum of absolute values of coefficients of this polynomial.  Let $\ee\times \mathcal T_j$ denote the vector of $N$ coordinates $(\ee\times \mathcal T_j^{(i)})_{i=1}^N$. Write $\mu_j$ for the sum of all $\mu_j^{(i)}$ for $i=1,\ldots,N$, and note that
\begin{align}\label{eq: mu ij full}
    \|\ee\times \mathcal T_j\|\le \mu_{j}\|\ee\|_j^j.
\end{align}

Recall from \Cref{ss: rank-def} that putting $\Sig=I$ gives us
\begin{align}
    \ee^S=-\JJ^\dagger \CC(\zz).
\end{align}

\begin{proposition} Assume that the optimization problem \eqref{eq:geometric_err SM} has $N$ homogeneous constraints of at most degree $d$, that $\JJ$ evaluated at $\zz$ has linearly independent rows and that $\CC(\zz)\in \mathrm{Im}\;\JJ$. Then
    \begin{align}
      \|\ee^S\|\le \|\ee^G\|+\|\JJ^\dagger\|\sum_{j=2}^d \frac{\mu_j}{j!}\|\ee^G\|_j^j.
    \end{align}
\end{proposition}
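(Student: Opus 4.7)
The strategy mirrors the proof of Proposition~\ref{prop: eS lower}: exhibit an explicit vector satisfying the linearized constraint, whose norm is bounded by the quantity on the right-hand side, and then invoke the fact that $\ee^S$ is the minimum-norm such vector. The main new ingredient is that, with $N$ constraints of arbitrary degree, the ``correction'' that compensates for the higher-order Taylor terms must now be obtained via the pseudo-inverse $\JJ^\dagger$, which is well-defined and satisfies $\JJ \JJ^\dagger = I$ precisely because $\JJ$ has linearly independent rows.

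Concretely, I would first write out the Taylor expansion of each $C_i$ around $\zz$ and evaluate at the true perturbation $\ee^G$: from $\CC(\zz+\ee^G)=\vec{0}$, one obtains
\begin{align}
    \JJ \ee^G = -\CC(\zz) - \sum_{j=2}^{d} \frac{1}{j!}\,\ee^G \times \mathcal T_j,
\end{align}
where $\ee^G \times \mathcal T_j \in \reals^N$ denotes the vector of higher-order tensor contractions. Define
\begin{align}
    \tilde{\ee} := \ee^G + \JJ^\dagger \sum_{j=2}^d \frac{1}{j!}\, \ee^G \times \mathcal T_j .
\end{align}
Applying $\JJ$ and using $\JJ \JJ^\dagger = I$ shows directly that $\CC(\zz) + \JJ \tilde{\ee} = \vec{0}$, so $\tilde{\ee}$ is feasible for the linearized problem. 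Since $\ee^S$ is the minimum-norm feasible vector, $\|\ee^S\| \le \|\tilde{\ee}\|$.

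To finish, apply the triangle inequality and the operator-norm bound for $\JJ^\dagger$:
\begin{align}
    \|\ee^S\| \le \|\ee^G\| + \|\JJ^\dagger\| \cdot \Big\|\sum_{j=2}^{d} \tfrac{1}{j!}\, \ee^G \times \mathcal T_j \Big\| \le \|\ee^G\| + \|\JJ^\dagger\| \sum_{j=2}^d \tfrac{1}{j!}\, \|\ee^G \times \mathcal T_j\|,
\end{align}
and then invoke the bound \eqref{eq: mu ij full}, namely $\|\ee \times \mathcal T_j\| \le \mu_j \|\ee\|_j^j$, on each summand. This yields exactly the claimed inequality.

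The only subtle point is ensuring that the pseudo-inverse construction is valid: this requires that $\JJ$ has linearly independent rows (so that $\JJ \JJ^\dagger = I$, not merely a projector) and that the right-hand side lies in the image of $\JJ$, which is automatic here since $\JJ$ is surjective under the hypothesis. The hypothesis $\CC(\zz) \in \mathrm{Im}\,\JJ$ guarantees that the linearized problem is feasible in the first place, so that $\ee^S$ is well-defined. I do not anticipate a serious obstacle; the proof is essentially bookkeeping on top of the one-constraint argument.
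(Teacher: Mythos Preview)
Your proposal is correct and follows essentially the same approach as the paper: construct the feasible vector $\tilde{\ee} = \ee^G + \JJ^\dagger \sum_{j\ge 2} \tfrac{1}{j!}\,\ee^G\times\mathcal T_j$ using $\JJ\JJ^\dagger = I$, invoke minimality of $\ee^S$, then bound via the triangle inequality, the operator norm of $\JJ^\dagger$, and the tensor estimate \eqref{eq: mu ij full}. Your write-up is in fact slightly more explicit than the paper's about the role of each hypothesis.
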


Recall that $\|\JJ^\dagger\|$ refers to the operator norm of the pseudo-inverse of $\JJ$.

\begin{proof} For $\ee^G$, we have 
\begin{align}
   0&= \CC(\zz)+\JJ\ee^G+\sum_{j=2}^d \frac{1}{j!} \ee^G\times \mathcal T_j\\
   &=\CC(\zz)+\JJ\left(\ee^G+\JJ^\dagger \sum_{j=2}^d \frac{1}{j!} \ee^G\times \mathcal T_j\right),
\end{align}
meaning that the norm of $\ee^S$ must be bounded from above by the norm of 
\begin{align}
    \ee^G+\JJ^\dagger \sum_{j=2}^d \frac{1}{j!} \ee^G\times \mathcal T_j.
\end{align}
However, by \eqref{eq: mu ij full} the statement now follows.
\end{proof}

\section{Multiple Quadric Constraints}\label{s: complete int}

In this section we prove an upper bound for the geometric error in the case of multiple constraints, and take a closer look at this bound in an example with two quadric constraints. Our main tool is the following celebrated result, as stated in \cite[Ch. 2, Cor. 2.15]{hatcher2005algebraic}.

\begin{theorem}[Brouwer's Fixed Point Theorem] Every continuous function $f$ from a non-empty convex compact subset $K$ of a Euclidean space to $K$ itself has a fixed point.
\end{theorem}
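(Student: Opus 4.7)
The plan is to prove Brouwer's fixed point theorem by the classical reduction to the no-retraction theorem. First I would reduce to the case where $K$ is the closed unit ball $B^n$ in $\RR^n$, where $n$ is the affine dimension of $K$. Any nonempty convex compact subset of Euclidean space is homeomorphic to such a ball: after translating so that the relative interior of $K$ contains the origin, the Minkowski functional of $K$ yields a homeomorphism $\phi: B^n \to K$. The fixed-point property is transferred via $\phi^{-1} \circ f \circ \phi$, so it suffices to prove the theorem for $B^n$.

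Second, I would argue by contradiction. Suppose $f: B^n \to B^n$ is continuous and has no fixed point, so that $f(x) \neq x$ for every $x \in B^n$. Define $r: B^n \to S^{n-1}$ by letting $r(x)$ be the unique intersection with $S^{n-1}$ of the ray starting at $f(x)$ and passing through $x$. Continuity of $r$ follows from continuity of $f$ and the fact that $\|f(x)-x\|>0$ on a compact set is bounded below locally. Moreover, if $x \in S^{n-1}$, then $x$ itself lies on the ray, so $r(x)=x$; hence $r$ is a continuous retraction of $B^n$ onto its boundary.

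Third, I would derive a contradiction by establishing that no such retraction exists. The slickest route is singular homology: $H_{n-1}(S^{n-1}) \cong \ZZ$ while $H_{n-1}(B^n)=0$, and a retraction would give a factorization $\mathrm{id}_{H_{n-1}(S^{n-1})} = r_* \circ \iota_*$ through a trivial group, which is impossible for $n \geq 1$. For $n=1$ the intermediate value theorem suffices directly. The main obstacle is justifying the no-retraction theorem without developing algebraic topology inside the paper; an elementary alternative would be Sperner's lemma via a simplicial approximation of $r$, which is self-contained but considerably longer, or a differential-topology argument applying Stokes' theorem to a smooth approximation of $r$ (using that the pullback of a top form on $S^{n-1}$ has zero integral through $B^n$ but nonzero integral on $S^{n-1}$). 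Since the paper merely cites this theorem from Hatcher, the cleanest presentation is to invoke the homological argument and defer to the reference.
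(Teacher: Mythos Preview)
Your proof sketch is correct and follows the standard route (reduction to the ball, the ray-retraction construction, and the no-retraction theorem via homology or an elementary substitute). There is nothing to compare against, however: the paper does not prove Brouwer's fixed point theorem at all. It is stated as a cited result from Hatcher and then used as a black-box tool in the proof of the subsequent theorem on multiple quadric constraints. So you have supplied strictly more than the paper does; your final remark, that the cleanest choice is simply to defer to the reference, is exactly what the paper does.
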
 
Recall that a fixed point of a function is a point $x^*$ in the set $K$ such that $f(x^*)=x^*.$   
The main theorem of this section deals with varieties $X$ that are a complete intersection, defined by $N$ quadrics $\CC(\zz)=(C_1(\zz),\ldots,C_N(\zz))^\tp$. By complete intersection, we mean that the dimension of $X$ is $n-N$. We discuss the general case afterwards.   

%We first need more notation. 
Define
\begin{align}
    \sigma_i&:= \textnormal{ spectral radius of } \frac{\|\JJ\|}{2}(\JJ^\dagger)^\tp\vec{H}_i\JJ^\dagger,\\
    \mathrm{cond}(\JJ)&:=\|\JJ\|\|\JJ^\dagger\|.
\end{align}
Neither $\sigma_i$ nor $\mathrm{cond}(\JJ)$ are independent of the individual scalings of $C_i$ for each $i=1,\ldots,N$. They are however independent of simultaneous scaling of all constraints.

Since we will deal with multiple constraints of degree two, we first recall a classical fact about the solutions of degree 2 equations in one variable. This will be used in the proof of the main theorem. 

\begin{remark}\label{re: cont sol} Consider the equation
\begin{align}\label{eq: deg two}
    \alpha x^2+\beta x + \gamma=0.
\end{align}
%There is a continuous function $x(\alpha,\beta,\gamma)$ that outputs a real solution to \eqref{eq: deg two} for inputs in the semi-algebraic set $\beta^2-4\alpha\gamma\ge 0,\alpha \neq 0$. To see this, note that as long as $\alpha\neq 0$, the solutions to the equation are
As long as $\alpha \neq 0$, the solutions to the equation are
\begin{align}
    x=\frac{-\beta\pm \sqrt{\beta^2-4\alpha\gamma}}{2\alpha}.
\end{align}
We can consider these solutions as a function of $\alpha, \beta$ and $\gamma$ that outputs a real solution to \eqref{eq: deg two} for inputs in the semi-algebraic set $\beta^2-4\alpha\gamma\ge 0,\alpha \neq 0$. Moreover, this function is continuous because the square root is continuous. 
%and the square-root is continuous for inputs $\ge 0$. 
The expression $\beta^2-4\alpha\gamma$ is called the discriminant of \eqref{eq: deg two}.
\end{remark}

\begin{theorem}\label{thm: main} Consider a complete intersection defined by the quadratic equations 
\begin{equation}
    \CC(\zz) = \left(C_1(\zz), C_2(\zz),~\dots,C_N(\zz)\right)^\tp = \vec{0}.
\end{equation}
Assume that $\JJ$ is full-rank at $\zz$. If a number $\kappa\ge 0$ satisfies   
    \begin{align}\label{eq: main 1}
       \kappa&\ge  \sum_{j=1}^N \left(\frac{|C_j(\zz)|}{\|\JJ\|}+\sigma_j\kappa\right)^2,\end{align} then 
    \begin{align}
        \|\ee^G\|\le \mathrm{cond}(\JJ)\kappa.
    \end{align}
\end{theorem}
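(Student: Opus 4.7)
The plan is to recast the system $\CC(\zz+\ee)=\vec{0}$ as a fixed-point problem on a convex compact set in $\RR^N$, and then invoke Brouwer. Since $\JJ$ has linearly independent rows, $\JJ\JJ^\dagger=I_N$. I would parameterize candidate perturbations by $\ee(\vl):=\|\JJ\|\JJ^\dagger\vl$ for $\vl\in\RR^N$, so that $\JJ\ee(\vl)=\|\JJ\|\vl$ and $\|\ee(\vl)\|\le \mathrm{cond}(\JJ)\,\|\vl\|$. Using that each $C_i$ is quadratic, the exact Taylor expansion
\[
C_i(\zz+\ee(\vl))=C_i(\zz)+\|\JJ\|\lambda_i+\frac{\|\JJ\|^2}{2}\vl^{\tp}(\JJ^\dagger)^{\tp}\vec{H}_i\JJ^\dagger\vl
\]
rewrites $\CC(\zz+\ee(\vl))=\vec{0}$ as the fixed-point equation $\vl=F(\vl)$, where
\[
F_i(\vl):=-\frac{C_i(\zz)}{\|\JJ\|}-\frac{\|\JJ\|}{2}\vl^{\tp}(\JJ^\dagger)^{\tp}\vec{H}_i\JJ^\dagger\vl.
\]
The map $F$ is polynomial, hence continuous.

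Next I would construct the invariant set. Let $c_j:=|C_j(\zz)|/\|\JJ\|+\sigma_j\kappa$ and take the box $K:=\prod_{j=1}^N[-c_j,c_j]$, which is convex and compact. Because $\vec{H}_i$ is symmetric, the matrix $\frac{\|\JJ\|}{2}(\JJ^\dagger)^{\tp}\vec{H}_i\JJ^\dagger$ is symmetric too; its operator norm equals its spectral radius $\sigma_i$, giving
\[
\Bigl|\tfrac{\|\JJ\|}{2}\vl^{\tp}(\JJ^\dagger)^{\tp}\vec{H}_i\JJ^\dagger\vl\Bigr|\le \sigma_i\|\vl\|^2.
\]
For $\vl\in K$, $\|\vl\|^2=\sum_j\lambda_j^2\le\sum_j c_j^2\le \kappa$ by hypothesis \eqref{eq: main 1}; hence
\[
|F_i(\vl)|\le \frac{|C_i(\zz)|}{\|\JJ\|}+\sigma_i\|\vl\|^2\le \frac{|C_i(\zz)|}{\|\JJ\|}+\sigma_i\kappa=c_i,
\]
so $F(K)\subseteq K$. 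Brouwer then produces a fixed point $\vl^{\ast}\in K$, and $\ee^{\ast}:=\|\JJ\|\JJ^\dagger\vl^{\ast}$ is an exact solution of $\CC(\zz+\ee^{\ast})=\vec{0}$. Since $\ee^G$ minimizes the norm over all feasible perturbations,
\[
\|\ee^G\|\le \|\ee^{\ast}\|\le \mathrm{cond}(\JJ)\,\|\vl^{\ast}\|,
\]
and the estimate $\|\vl^{\ast}\|^2\le\sum_j c_j^2\le \kappa$ yields the claimed bound.

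The main obstacle is engineering the invariant set $K$ so that the self-map property $F(K)\subseteq K$ is equivalent to the scalar hypothesis \eqref{eq: main 1}. The key idea is to let the box half-widths $c_j$ themselves be affine in $\kappa$, decomposing into an offset $|C_j(\zz)|/\|\JJ\|$ (distance from the model) and a curvature correction $\sigma_j\kappa$; this makes the componentwise inequality $|F_j(\vl)|\le c_j$ reduce to $\sum_j c_j^2\le \kappa$, which is exactly \eqref{eq: main 1}. Symmetry of the Hessians is essential: it converts the quadratic form $\vl^{\tp}(\JJ^\dagger)^{\tp}\vec{H}_i\JJ^\dagger\vl$ into a spectral-radius bound; without symmetry, an auxiliary operator-norm estimate would be required and the constants would degrade. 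A secondary subtlety is that $\vl^{\ast}$ need not be unique, but uniqueness is not needed since we only use the existence of some feasible $\ee^{\ast}$ to upper-bound $\|\ee^G\|$.
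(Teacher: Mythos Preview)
Your proof is correct and in fact streamlines the paper's argument. Both proofs parameterize $\ee(\vl)=\|\JJ\|\JJ^\dagger\vl$, bound the quadratic term by the spectral radius $\sigma_i$, and apply Brouwer's fixed-point theorem on a box whose diameter is controlled by $\kappa$. The difference lies in how the self-map $F$ is constructed. The paper fixes $\lambda_j$ for $j\neq i$ and uses the intermediate value theorem on the univariate quadratic $f_i(\vl)=C_i(\zz+\ee(\vl))/\|\JJ\|$ to extract a root $\lambda_i^*(\hat{\vl}_i)$; this then requires a separate argument (via the quadratic formula and non-negativity of the discriminant on the box) that $\lambda_i^*$ depends continuously on the remaining coordinates. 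You instead rearrange $C_i(\zz+\ee(\vl))=0$ directly into the explicit fixed-point form $\lambda_i=F_i(\vl)$ with $F_i$ a polynomial, so continuity is automatic and the self-map check $F(K)\subseteq K$ reduces immediately to the single scalar inequality \eqref{eq: main 1}. Your box $K=\prod_j[-c_j,c_j]$ is centered at the origin, whereas the paper's hypercube is centered at $-\CC(\zz)/\|\JJ\|$; both satisfy $\|\vl\|^2\le\kappa$ on the box, so either choice works. Your route avoids the intermediate-value and continuity-of-roots machinery entirely, at no cost in generality or in the constants obtained. Note that both your argument and the paper's final step yield $\|\vl^*\|^2\le\kappa$, hence $\|\ee^G\|\le\mathrm{cond}(\JJ)\|\vl^*\|$ with $\|\vl^*\|\le\sqrt{\kappa}$; the paper's printed bound $\mathrm{cond}(\JJ)\kappa$ appears to carry a typo that your derivation inherits.
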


It is easy to check if there is a $\kappa$ such that the conditions \eqref{eq: main 1} holds. Indeed, one solves the quadratic equation  
 \begin{align}
       \kappa&=  \sum_{j=1}^N \left(\frac{|C_j(\zz)|}{\|\JJ\|}+\sigma_j\kappa\right)^2.
\end{align} 
If there exists real solutions, then they are $\ge 0$, because the right-hand side of the equation is always non-negative. In this case, the smallest real solution is the smallest $\kappa$ satisfying \eqref{eq: main 1}.

In order to relate theorem to the Sampson error as described in \Cref{ss: mult const}, we note that if the conditions of the theorem hold for $\kappa=c\|\ee^S\|/\mathrm{cond}(\JJ)$ for some $c$, then $\|\ee^G\|\le c\|\ee^S\|$.

\begin{proof}  Define 
\begin{align}
\ee(\vec{\lambda}):=\|\JJ\|\JJ^\dagger \vec{\lambda}. 
\end{align} 
Then $f_i(\vec{\lambda}):=C_i(\zz+\ee(\vec{\lambda}))/\|\JJ\|$ equals
\begin{align}\begin{aligned}
 \frac{C_i(\zz)}{\|\JJ\|}+\lambda_i+\frac{\|\JJ\|}{2}\vec{\lambda}^\tp(\JJ^\dagger)^\tp \vec{H}_i\JJ^\dagger\vec{\lambda}.
\end{aligned}
\end{align}
We estimate $f_i$ from above and below using $\sigma_i$:

\begin{align}\label{eq: est}
   \frac{C_i(\zz)}{\|\JJ\|}+\lambda_i-\sigma_i\|\vec{\lambda}\|^2\le f_i(\vec{\lambda})\le \frac{C_i(\zz)}{\|\JJ\|}+\lambda_i+\sigma_i\|\vec{\lambda}\|^2.
\end{align}
%We consider the following region: $ \vl\in \mathbb R^N $ such that
This estimation can be refined for $\vl$ in a specific region. Indeed, let $ \vl\in \mathbb R^N $ satisfy 
\begin{align}\label{eq: reg}
  \left|\lambda_i+\frac{C_i(\zz)}{\|\JJ\|}\right|\le \sigma_i\kappa,
\end{align}
for each $i=1,\ldots,N$ and let $\kappa$ be as in the statement. Using the reverse triangle inequality, we have
\begin{align}
|\lambda_i|\le   \frac{|C_i|}{\|\JJ\|}+\sigma_i\kappa,
\end{align}
and it follows by \eqref{eq: main 1} that $\sum_{j=1}^N\lambda_j^2\le \kappa$.

Then from \eqref{eq: est}, we get
\begin{align}\label{eq: est prim}
   \frac{C_i(\zz)}{\|\JJ\|}+\lambda_i-\sigma_i\kappa\le f_i(\vec{\lambda})\le \frac{C_i(\zz)}{\|\JJ\|}+\lambda_i+\sigma_i\kappa,
\end{align}
and this estimation only depends on $\lambda_i$. Fixing each $\lambda_j,j\neq i$ in \eqref{eq: reg}, the solutions $\lambda_i^{\pm}$ to the two linear equations are 
\begin{align}
\lambda_i^{\pm}=-\frac{C_i(\zz)}{\|\JJ\|} \mp \sigma_i\kappa,
\end{align}
which satisfy \eqref{eq: reg}. Note that by \eqref{eq: est prim}, for $\lambda_i^-$, $f_i(\vl)\ge 0$ and for $\lambda_i^+$, $f_i(\vl)\le 0$. It follows that there must be a real $\lambda_i^*$ such that $f_i(\vl)=0$ in the interval 
\begin{align}
    \left|\lambda_i^*+\frac{C_i(\zz)}{\|\JJ\|}\right|\le \sigma_i\kappa,
\end{align}
because this interval contains both $\lambda_i^{\pm}$.

%For $\vec{t}\in \mathbb R^N$ let $\hat{\vec{t}}_i\in \mathbb R^{N-1}$ be the vector we get by removing the $i$-th coordinate. 
The existence of a real solution implies that the discriminant is greater or equal than $0$ in \eqref{eq: reg} and by \Cref{re: cont sol} we have that $\lambda_i^*(\hat{\vl}_i)$ are continuous functions (here $\hat{\vl}_i$ denotes the vector in $\reals^{N-1}$ obtained by removing the $i$-th coordinate from $\vl\in\reals^N$).

Let $K$ be the hypercube defined in \eqref{eq: reg}. In order to apply Brouwer's fixed point theorem we consider the continuous function
\begin{align}
F: K&\to K,\\
    \vl &\to (\lambda_1^*(\hat{\vl}_1),\ldots,\lambda_N^*(\hat{\vl}_N)).
\end{align}
By the theorem, there is a fixed point $\vl^*\in K$ with the property that $F(\vl^*)=\vl^*$. This means exactly that $(\vl^*)_i=\lambda_i^*(\hat{\vl}_i)$ for each $i=1,\ldots,N$. By construction, $\lambda_i^*(\hat{\vl}_i)$ solves $f_i(\vl)=0$ for fixed $\lambda_j,j\neq i$. This means that $f_i(\vl^*)=0$ for each $i=1,\ldots,N$.

To summarize, there exists a $\vl^*\in K$ such that $C_i(\zz+\ee(\vl^*))=0$ for each $i=1,\ldots,N$. This means that $\|\ee^G\|\le \|\ee(\vl^*)\|$. Further, as we have defined $\ee$, $\|\ee(\vl^*)\|\le \|\JJ\|\|\JJ^\dagger\|\|\vl^*\|^2$. Finally, $\|\vl^*\|^2\le \kappa$ as noted above and we are done. 
\end{proof}

In the general case, where we are given $N$ quadric constraints that define a variety $X$ of dimension $m$ that is not necessarily a complete intersection, we can use the fact that locally, it is defined by $n-m$ constraints. To be precise, the Jacobian at a generic point $\xx$ of $X$ is of rank $n-m$, and any choice of $n-m$ constraints with full-rank Jacobian locally describe $X$ around $\xx$. Heuristically, given a data point $\zz$ outside the variety, we choose $n-m$ constraints for which the Jacobian has full-rank and apply \Cref{thm: main} to these constraints. We leave it to future work to make this rigorous.% In order to make this rigorous, one could prove (under additional assumptions) that for the solution $\vl^*$ constructed in the proof of \Cref{thm: main}, the Jacobian at the point $\zz+\ee(\vl^*)$ is full-rank. We leave such matters for future work.   

We illustrate the theorem with an example below.

\begin{example} Consider the two quadratic constraints defining a variety in $\mathbb R^3$,
\begin{align}
    x^2+y^2+z^2-1&=0,\\
    z-xy&=0.
\end{align}
This curve is a complete intersection, and the associated Jacobian is 
\begin{align}
    \JJ= & \begin{bmatrix}
        2x & 2y & 2z \\
        -y & -x & 1
    \end{bmatrix}.
    % \JJ_1&=\begin{bmatrix}
    %     2x & 2y & 2z
    % \end{bmatrix}, \vec{H}_1&=\begin{bmatrix}
    %     2 & 0 & 0\\ 0 & 2 & 0\\ 0 & 0 &2
    % \end{bmatrix}\\
    %  \JJ_2&=\begin{bmatrix}
    %     -y & -x & 1
    % \end{bmatrix}, \vec{H}_2&=\begin{bmatrix}
    %     0 & -1 & 0\\ -1 & 0 & 0\\ 0 & 0 &0
    % \end{bmatrix}.
\end{align}
%We have $\JJ_1^\dagger=\JJ_1^\tp /4(x^2+y^2+z^2)$ and $\JJ_2^\dagger=\JJ_2^\tp /(x^2+y^2+1)$.
%One can check that $x=y=\sqrt{\sqrt{2}-1},z=\sqrt{2}-1$ is a solution. Generically, $\JJ$ is rank-two and the pseudo-inverse takes the form
%that generically has rank 2. 
% \begin{align}
%     \JJ^\dagger=\frac{1}{A}\begin{bmatrix}
%         2x & -y\\ 2y & -x \\ 2z & 1
%     \end{bmatrix}\begin{bmatrix}
%         x^2+y^2+1 & -2z\\ -z & 4(x^2+y^2+z^2)
%     \end{bmatrix}.
% \end{align}
% with $A=4(x^2+y^2+z^2)(x^2+y^2+1)-2z^2.$

The bound coming from \Cref{thm: main} can be used as long as there exists a $k$ such that 
\begin{align}
   \kappa&\ge  \sum_{j=1}^N \left(\frac{|C_j(\zz)|}{\|\JJ\|}+\sigma_j\kappa\right)^2.
\end{align}

Note, that this inequality will have a real solution depending on the value of $\zz.$ To illustrate how often this bound is satisfied, we conducted the following numerical experiment:
\begin{itemize}
    \item We sample $m$ data points in the curve,
    \item introduce an error $\epsilon$ in each point and generate a noisy sample of size $m$, 
    \item for each noisy point we compute $C_i, \|J\|$ and $\sigma_i$ and decide whether the inequality \eqref{eq: main 1} has a solution. 
    \item We count the percentage of points in the sample that had a positive result in the previous step. 
\end{itemize}
We present our results in \Cref{fig:hist}

\begin{figure}
    \centering
    \begin{subfigure}[h]{0.2\textwidth}
        \centering
        \includegraphics[scale=0.28]{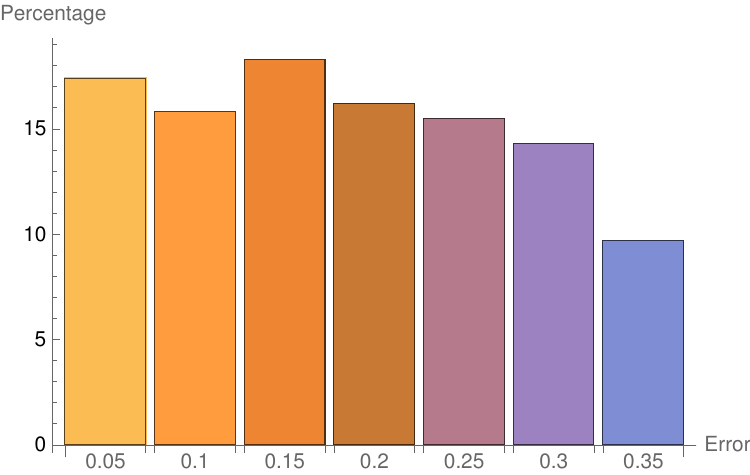}
        \caption{$m=200$}  
    \end{subfigure}
    \hfill
    \begin{subfigure}[h]{0.2\textwidth}
        \centering
        \includegraphics[scale=0.28]{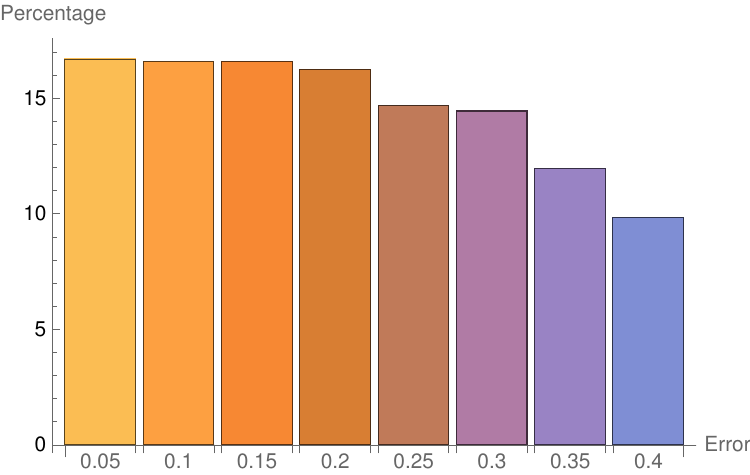}
        \caption{$m=500$}  
    \end{subfigure}
    \begin{subfigure}[h]{0.2\textwidth}
        \centering
        \includegraphics[scale=0.28]{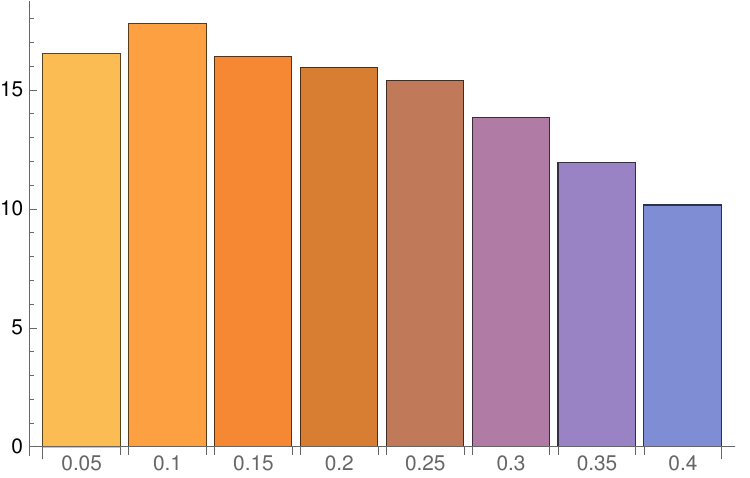}
        \caption{$m=1000$}  
    \end{subfigure}
    \caption{Percentage of data points whose geometric error can be bounded using \Cref{thm: main}. These percentages were computed over noisy samples of 200, 500 and 1000 points, and are depicted respectively in the histograms.  }
    \label{fig:hist}
\end{figure}

%\FR{Try to plot \Cref{eq: sig} and study numerically small perturbation around a point on the variety.}

%\FR{Plot numerically what percentage satisfies the constraints at different noise levels.}
\end{example}

%%%%%%%%%%%%%%%%%%%%%%%%%%%%%%%%%%%%%%%%%%%%%%%%%%%%%%%%%%%%%%%%%%%%%%%%%%%%%%%%%%%%%%%%%%%%%%%%%%%%%%%%%%%%%%%%%%%%%%%%%%%%%%%%%%%%%%%%%%%%%%%%%%%

\iffalse
\section{Updated Linearization Point}
\FR{Do we keep this section? }

In the original Sampson error we linearize at the measurement. However we can also consider linearization around another point.

\begin{equation}
    \min_{\xx} \| \xx - \xx_0 \|^2_\Sig \quad \text{s.t.} \quad \CC(\xx_k) + J (\xx - \xx_k) = 0
\end{equation}
where $J$ is computed at $\xx_k$. We can rewrite in terms of $\ee$,
\begin{equation}
    \min_{\ee} \| \ee \|^2_\Sig \quad \text{s.t.} \quad \CC(\xx_k) + J (\ee - \ee_k) = 0
\end{equation}
where $\ee_k = \xx_k - \xx_0$. 
Using the same derivations as before we get
\begin{equation}
     \| \ee \|^2_\Sig = \| (J\Sig^{1/2})^\dagger (\CC(\xx_k) - J\ee_k) \|^2
\end{equation}
\fi

\section{Optimization of Sampson Approximations} \label{sec:opt}
The constraint $C(\zz,\theta)$ typically depends not only on the measurements $\zz$, but also some model parameters $\theta$ which we are estimating. Fitting the parameters we want to minimize the residuals for each measurement $\zz_k, ~k=1,\dots,m$.

\begin{equation}
    \theta^\star = \arg\min_\theta \sum_k \| \JJ_k^\dagger \CC(\zz_k, \theta) \|^2
\end{equation}
where $\JJ_k = \frac{\partial \CC(\zz,\theta)}{\partial \zz}\rvert_{\zz=\zz_k}$.

To apply standard non-linear least squares algorithms (e.g.~Levenberg-Marquardt), we need to evaluate the Jacobian of the residuals $\vec{r}_k = \JJ_k^\dagger \CC(\zz_k)$ w.r.t.~$\theta$, i.e.
\begin{align}
     \frac{\partial \vec{r}_k}{\partial \theta} &= \frac{\partial}{\partial \theta} \left( (\frac{\partial \CC}{\partial \zz} )^\dagger \CC(\zz,\theta) \right) \\
     &= \left( \frac{\partial}{\partial \theta} (\frac{\partial \CC}{\partial \zz})^\dagger \right) \CC(\zz,\theta) +  (\frac{\partial \CC}{\partial \zz})^\dagger \frac{\partial \CC}{\partial \theta}
\end{align}
Denote $\JJ_{\zz} = \frac{\partial \CC}{\partial \zz}$ and $\JJ_\theta = \frac{\partial \CC}{\partial \theta}$, then
\begin{align}
     \frac{\partial}{\partial \theta} \JJ_{\zz}^\dagger =
      -\JJ_{\zz}^\dagger \JJ_\theta \JJ_{\zz}^\dagger
      + \JJ_{\zz}^\dagger (\JJ_{\zz}^\dagger)^T \JJ_\theta (I - \JJ_{\zz} \JJ_{\zz}^\dagger)
\end{align}
See Golub and Pereyra \cite{golub1973differentiation} for more details.

%%%%%%%%%%%%%%%%%%%%%%%%%%%%%%%%%%%%%%%%%%%%%%%%%%%%%%%%%%%%%%%%%%%%%%%%%%%%%%%%%%%%%%%%%%%%%%%%%%%%%%%%%%%%%%%%%%%%%%%%%%%%%%%%%%%%%%%%%%%%%%%%%%%%%

\section{Additional Experimental Results}  \label{sec:extra_experiment}
In Table~\ref{tbl:7scenes_full_results} we show the full per-scene results of the covariance aware camera pose refinement from Section 4.4~in the main paper, and Table~\ref{tbl:ref_vp} show the results of refining the vanishing points using both the mid-point error and the Sampson error. 
For this data there was no difference in the results of the refinement.

\begin{table}[ht]
\centering
\begin{tabular}{l c c l c c} \toprule
& \multicolumn{2}{c}{YUB+\cite{yorkurban}} && \multicolumn{2}{c}{NYU \cite{silberman_2012}\cite{kluger2020consac}} \\ \cmidrule{2-3} \cmidrule{5-6}
& Mean  & AUC && Mean  & AUC \\ \midrule
VP from \cite{pautrat2023deeplsd} & 1.62  &  0.86 && 3.24 & 0.70 \\
~\rotatebox[origin=c]{180}{$\Lsh$} Mid-point & 1.57 & 0.86 && 3.24 & 0.70 \\
~\rotatebox[origin=c]{180}{$\Lsh$} Sampson. & 1.57 & 0.86 &&3.24 & 0.70 \\ \bottomrule
\end{tabular}
\caption{\textbf{Refinement of vanishing points.}}
\label{tbl:ref_vp}
\end{table}

\begin{table*}[ht]
    \centering
    \resizebox{\textwidth}{!}{
    \begin{tabular}{l l c  c  c  c  c  c  c  ccc} \toprule
         %& \multicolumn{2}{c}{Chess} & \multicolumn{2}{c}{Fire} & \multicolumn{2}{c}{Heads} & \multicolumn{2}{c}{Office} & \multicolumn{2}{c}{Pumpkin} & \multicolumn{2}{c}{Redkitchen} & \multicolumn{2}{c}{Stairs} & \multicolumn{3}{c}{Average} \\
         $\tau$ && Chess &  Fire & Heads & Office & Pumpkin & Redkitchen & Stairs & Average \\         \midrule
\multirow{3}{*}{\rotatebox[origin=c]{90}{$5$px}}
&Reproj.     & 0.86 / 2.47 & 0.84 / 2.11 & 0.75 / 1.07 & 0.89 / 3.05 & 1.24 / 4.78 & 1.39 / 4.15 & 1.22 / 4.44 & 1.03 / 3.20\\ 
&Reproj+Cov  &0.85 / 2.45 & 0.82 / 2.04 & 0.74 / 1.02 & 0.87 / 2.99 & 1.22 / 4.75 & 1.36 / 4.03 & 1.15 / 4.28 & 1.01 / 3.10 \\
&~\rotatebox[origin=c]{180}{$\Lsh$} Sampson     &  0.85 / 2.45 & 0.82 / 2.04 & 0.74 / 1.02 & 0.87 / 2.99 & 1.22 / 4.74 & 1.36 / 4.02 & 1.14 / 4.27 & 1.01 / 3.10  \\ \midrule
\multirow{3}{*}{\rotatebox[origin=c]{90}{$10$px}}
&Reproj.     &0.84 / 2.42 & 0.90 / 2.25 & 0.82 / 1.18 & 0.92 / 3.07 & 1.25 / 4.79 & 1.39 / 4.20 & 1.32 / 4.78 & 1.06 / 3.24 \\
&Reproj+Cov  &0.79 / 2.38 & 0.81 / 2.03 & 0.73 / 1.03 & 0.86 / 2.92 & 1.20 / 4.41 & 1.32 / 3.83 & 1.12 / 4.17 & 0.99 / 2.99  \\
&~\rotatebox[origin=c]{180}{$\Lsh$} Sampson     &0.79 / 2.37 & 0.81 / 2.03 & 0.73 / 1.03 & 0.86 / 2.94 & 1.20 / 4.40 & 1.32 / 3.85 & 1.12 / 4.20 & 0.98 / 2.99  \\ \midrule
\multirow{3}{*}{\rotatebox[origin=c]{90}{$20$px}}
&Reproj.     &0.87 / 2.53 & 1.08 / 2.72 & 1.04 / 1.45 & 1.06 / 3.43 & 1.38 / 5.41 & 1.49 / 4.50 & 1.99 / 6.84 & 1.21 / 3.63  \\
&Reproj+Cov  &0.75 / 2.30 & 0.80 / 2.06 & 0.73 / 1.03 & 0.88 / 2.91 & 1.13 / 4.30 & 1.29 / 3.81 & 1.36 / 4.89 & 0.99 / 3.00 \\
&~\rotatebox[origin=c]{180}{$\Lsh$} Sampson     &0.75 / 2.31 & 0.80 / 2.07 & 0.73 / 1.03 & 0.88 / 2.92 & 1.13 / 4.34 & 1.29 / 3.88 & 1.44 / 5.25 & 1.00 / 3.02  \\ \bottomrule
    \end{tabular}}
    \caption{\textbf{Full results for 7Scenes}. Table shows the median  rotation (deg.)  and translation (cm) errors for each scene in the 7Scenes dataset for the experiment in Section 4.4 in the main paper.}
    \label{tbl:7scenes_full_results}
\end{table*}

%%%%%%%%% REFERENCES
%{\small
%\bibliographystyle{ieee_fullname}
%\bibliography{main}
%}

%\end{document}

\end{document}